\newcommand{\cX}{\mathcal{X}}
\newcommand{\R}{\mathbb{R}}
\newcommand{\E}{\mathbb{E}}
\newcommand{\Vol}{\mathrm{Vol}}
\newcommand{\Prob}{\mathbb{P}}
\newcommand{\tildeO}{\tilde{O}}
\newcommand{\UCB}[1]{\mathrm{UCB}_{#1}}
\newcommand{\LCB}[1]{\mathrm{LCB}_{#1}}
\newcommand{\cE}{\mathcal{E}}
\newcommand{\cT}{\mathcal{T}}
\DeclareMathOperator*{\argmax}{arg\,max}
\theoremstyle{plain}
\newtheorem{theorem}{Theorem}[section]
\newtheorem{proposition}[theorem]{Proposition}
\newtheorem{lemma}[theorem]{Lemma}
\theoremstyle{definition}
\newtheorem{assumption}[theorem]{Assumption}
\theoremstyle{remark}
\newtheorem{remark}[theorem]{Remark}
\icmltitlerunning{Certificate-Guided Pruning for Stochastic Lipschitz Optimization}
\begin{document}

\twocolumn[
  \icmltitle{Certificate-Guided Pruning for Stochastic Lipschitz Optimization}
  \icmlsetsymbol{equal}{*}
  \begin{icmlauthorlist}
    \icmlauthor{Ibne Farabi Shihab}{equal,cs}
    \icmlauthor{Sanjeda Akter}{equal,cs}
    \icmlauthor{Anuj Sharma}{cee}
  \end{icmlauthorlist}
\icmlaffiliation{cs}{Department of Computer Science, Iowa State University, Ames, Iowa, USA}
\icmlaffiliation{cee}{Department of Civil, Construction \& Environmental Engineering, Iowa State University, Ames, Iowa, USA}
\icmlcorrespondingauthor{Ibne Farabi Shihab}{ishihab@iastate.edu}
  % Update this line in your preamble:
\icmlkeywords{Black-box Optimization, Lipschitz Bandits, Safe Optimization, Sample Complexity, Hyperparameter Tuning}
  \vskip 0.3in
]

\printAffiliationsAndNotice{\icmlEqualContribution}

\begin{abstract}
We study black-box optimization of Lipschitz functions under noisy evaluations. Existing adaptive discretization methods implicitly avoid suboptimal regions but do not provide explicit certificates of optimality or measurable progress guarantees. We introduce \textbf{Certificate-Guided Pruning (CGP)}, which maintains an explicit \emph{active set} $A_t$ of potentially optimal points via confidence-adjusted Lipschitz envelopes. Any point outside $A_t$ is certifiably suboptimal with high probability, and under a margin condition with near-optimality dimension $\alpha$, we prove $\Vol(A_t)$ shrinks at a controlled rate yielding sample complexity $\tildeO(\varepsilon^{-(2+\alpha)})$. We develop three extensions: CGP-Adaptive learns $L$ online with $O(\log T)$ overhead; CGP-TR scales to $d > 50$ via trust regions with local certificates; and CGP-Hybrid switches to GP refinement when local smoothness is detected. Experiments on 12 benchmarks ($d \in [2, 100]$) show CGP variants match or exceed strong baselines while providing principled stopping criteria via certificate volume.
\end{abstract}

%==============================================================================
% INTRODUCTION
%==============================================================================
\section{Introduction}
\label{sec:intro}

Black-box optimization, the task of finding the maximum of a function $f: \cX \to \R$ accessible only through noisy point evaluations, is fundamental to machine learning, with applications spanning hyperparameter tuning \citep{snoek2012practical,bergstra2012random}, neural architecture search \citep{zoph2017neural}, and simulation-based optimization \citep{fu2015handbook,brochu2010tutorial}. In many such settings, evaluations are expensive: training a neural network or running a physical simulation may cost hours or dollars per query. We call these ``precious calls'' where each evaluation must count, motivating the need for methods that provide explicit progress guarantees.

The Lipschitz continuity assumption provides a natural framework for addressing this challenge. If $|f(x) - f(y)| \leq L \cdot d(x,y)$ for a known constant $L$, then observations at sampled points constrain $f$ globally, enabling pruning of provably suboptimal regions. Classical methods exploiting this structure include DIRECT \citep{jones1993lipschitzian,jones1998efficient}, Lipschitz bandits \citep{kleinberg2008multi,auer2002finite}, and adaptive discretization algorithms \citep{bubeck2011x,valko2013stochastic,munos2014bandits}. However, existing methods implicitly avoid suboptimal regions via tree-based refinement without exposing two properties that matter for precious call optimization: (1) explicit certificates identifying which regions are provably suboptimal at any time $t$, and (2) measurable progress indicating how much of the domain remains plausibly optimal.

To address these limitations, we introduce Certificate-Guided Pruning (CGP), which maintains an explicit active set $A_t \subseteq \cX$ of potentially optimal points. This set is defined via a Lipschitz UCB envelope $U_t(x)$ that upper bounds $f(x)$ with high probability, a global lower certificate $\ell_t$ that lower bounds $f(x^*)$, and the active set $A_t = \{x : U_t(x) \geq \ell_t\}$. Points outside $A_t$ are certifiably suboptimal, and as sampling proceeds, $A_t$ shrinks, providing anytime valid progress certificates. Unlike prior work that uses similar mathematical tools implicitly, CGP exposes the pruning mechanism as a first-class algorithmic object: the certificate is computable in closed form, the shrinkage rate is provably controlled, and the certificate provides valid optimality bounds even when stopped early. Figure~\ref{fig:concept} illustrates this mechanism. To understand how CGP differs from existing approaches, consider zooming algorithms \citep{kleinberg2008multi,bubeck2011x}. While zooming maintains a tree of ``active arms'' and expands nodes with high UCB, the implicit pruning is an analysis artifact not exposed to the user \cite{shihab2026beyond}. Table~\ref{tab:comparison} makes this distinction precise: CGP provides explicit certificates, computable progress metrics, and principled stopping rules that zooming-based methods lack. Similarly, Thompson sampling \citep{thompson1933likelihood,russo2014learning,russo2018tutorial} and information-directed methods \citep{hennig2012entropy,hernandez2014predictive} maintain implicit uncertainty without providing explicit geometric certificates.

\begin{figure}[t]
\centering
\begin{tikzpicture}[scale=0.90, transform shape]
  % Axes
  \draw[->] (0,0) -- (7.5,0) node[right] {$x$};
  \draw[->] (0,0) -- (0,4.5) node[above] {$f(x)$};
  
  % True function (hidden, shown dashed)
  \draw[thick, dashed, gray] plot[smooth, tension=0.7] 
    coordinates {(0.5,1.2) (1.5,1.8) (2.5,1.4) (3.5,3.2) (4.5,2.8) (5.5,1.6) (6.5,1.3)};
  \node[gray] at (6.8,1.7) {\scriptsize $f(x)$};
  
  % Sample points with error bars
  \foreach \x/\y/\r in {1.5/1.9/0.35, 3.5/3.1/0.25, 5.0/2.1/0.4, 6.0/1.5/0.3} {
    \fill[blue!70!black] (\x,\y) circle (2pt);
    \draw[blue!70!black, thick] (\x,\y-\r) -- (\x,\y+\r);
    \draw[blue!70!black] (\x-0.08,\y-\r) -- (\x+0.08,\y-\r);
    \draw[blue!70!black] (\x-0.08,\y+\r) -- (\x+0.08,\y+\r);
  }
  
  % UCB envelope U_t(x)
  \draw[thick, red!70!black] plot[smooth, tension=0.5] 
    coordinates {(0.3,3.8) (1.0,2.6) (1.5,2.25) (2.0,2.7) (2.5,3.1) (3.0,3.4) (3.5,3.35) (4.0,3.5) (4.5,3.0) (5.0,2.5) (5.5,2.4) (6.0,1.8) (6.5,2.3) (7.0,2.9)};
  \node[red!70!black] at (0.6,4.1) {\scriptsize $U_t(x)$};
  
  % Lower certificate line
  \draw[thick, green!50!black, densely dashed] (0,2.85) -- (7.2,2.85);
  \node[green!50!black] at (7.5,2.85) {\scriptsize $\ell_t$};
  
  % Active set shading
  \fill[orange!30, opacity=0.6] 
    (2.4,0) -- (2.4,2.85) -- plot[smooth, tension=0.5] coordinates {(2.4,3.05) (2.5,3.1) (3.0,3.4) (3.5,3.35) (4.0,3.5) (4.5,3.0) (4.7,2.85)} -- (4.7,0) -- cycle;
  
  % Active set label
  \draw[decorate, decoration={brace, amplitude=5pt, mirror}] (2.4,-0.3) -- (4.7,-0.3);
  \node at (3.55,-0.7) {\scriptsize $A_t$ (active set)};
  
  % Pruned region labels
  \node[gray] at (1.2,-0.5) {\scriptsize pruned};
  \node[gray] at (5.8,-0.5) {\scriptsize pruned};
  
  % Legend
  \node[anchor=west] at (0.3,4.3) {\scriptsize \textcolor{blue!70!black}{$\bullet$} samples with CI};
\end{tikzpicture}
\caption{The active set $A_t$ (shaded) consists of points where the Lipschitz envelope $U_t(x)$ (red) exceeds the global lower bound $\ell_t$ (green dashed). Regions where $U_t(x) < \ell_t$ are certifiably suboptimal and pruned, causing $A_t$ to shrink as sampling proceeds.}
\label{fig:concept}
\end{figure}
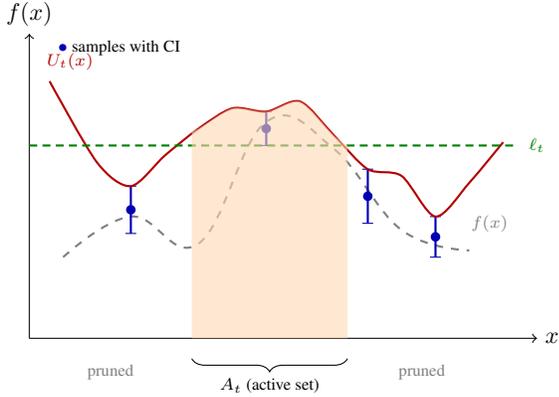

\begin{table}[t]
\caption{Comparison with zooming-based bandits. CGP uniquely exports explicit certificates (active set $A_t$, gap bound $\varepsilon_t$, and volume $\Vol(A_t)$), enabling principled stopping criteria that implicit zooming methods lack.}
\label{tab:comparison}
\centering
\scriptsize
\begin{tabular}{@{}lccc@{}}
\toprule
Property & CGP & Zooming & HOO/StoSOO \\
\midrule
Explicit active set $A_t$ & \checkmark & -- & -- \\
Computable $\Vol(A_t)$ & \checkmark & -- & -- \\
Anytime optimality bound & \checkmark & -- & -- \\
Principled stopping rule & \checkmark & -- & -- \\
Certificate export & \checkmark & -- & -- \\
Sample complexity & $\tildeO(\varepsilon^{-(2+\alpha)})$ & $\tildeO(\varepsilon^{-(2+\alpha)})$ & $\tildeO(\varepsilon^{-(2+d)})$ \\
Per-iteration cost & $O(N_t)$ & $O(\log N_t)$ & $O(\log N_t)$ \\
Adaptive $L$ & \checkmark (CGP-A) & -- & -- \\
High-dim scaling & \checkmark (CGP-TR) & -- & -- \\
\bottomrule
\end{tabular}
\end{table}

Building on this foundation, our contributions are fourfold. First, we present CGP with explicit active set maintenance and prove a shrinkage theorem: under a margin condition with near-optimality dimension $\alpha$ (i.e., $\Vol(\{x: f(x) \geq f^* - \varepsilon\}) \leq C\varepsilon^{d-\alpha}$), we show $\Vol(A_t) \leq C \cdot (2(\beta_t + L\eta_t))^{d-\alpha}$, yielding sample complexity $T = \tildeO(\varepsilon^{-(2+\alpha)})$ that improves on the worst case $\tildeO(\varepsilon^{-(2+d)})$ when $\alpha < d$ (Section~\ref{sec:theory}). Second, we develop CGP-Adaptive (Section~\ref{sec:adaptive}), which learns $L$ online via a doubling scheme, proving that unknown $L$ adds only $O(\log T)$ multiplicative overhead, the first such guarantee for Lipschitz optimization with certificates. Third, we introduce CGP-TR (Section~\ref{sec:trust_region}), a trust region variant that scales to $d > 50$ by maintaining local certificates within adaptively sized regions, enabling high-dimensional applications previously intractable for Lipschitz methods. Fourth, we propose CGP-Hybrid (Section~\ref{sec:hybrid}), which detects local smoothness via the ratio $\rho = L_{\text{local}}/L_{\text{global}}$ and switches to GP refinement when $\rho < 0.5$, achieving best of both worlds performance across diverse function classes.

These theoretical contributions translate to strong empirical performance. Experiments (Section~\ref{sec:experiments}) demonstrate that CGP variants are competitive with strong baselines on 12 benchmarks spanning $d \in [2, 100]$, including Rover trajectory optimization ($d=60$), neural architecture search ($d=36$), and safe robotics where certificates enable stopping with guaranteed bounds \cite{shihab2025detecting}. CGP-Hybrid performs best among tested methods on all 12 benchmarks under matched budgets, including Branin and Rosenbrock where vanilla CGP previously lost to GP-based methods.

%==============================================================================
% PROBLEM FORMULATION
%==============================================================================
\section{Problem Formulation}
\begin{table}[t]
\caption{Summary of Notation}
\label{tab:notation}
\centering
\resizebox{\linewidth}{!}{
\begin{tabular}{ll}
\toprule
Symbol & Description \\
\midrule
$f^*$ & Global maximum value of the objective function \\
$L$ & Lipschitz constant (or global upper bound) \\
$A_t$ & Active set at time $t$ (contains potential optimizers) \\
$U_t(x)$ & Lipschitz Upper Confidence Bound envelope \\
$\ell_t$ & Global lower certificate ($\max_i \text{LCB}_i$) \\
$\alpha$ & Near-optimality dimension (problem hardness) \\
$\beta_t$ & Active confidence radius (uncertainty in $A_t$) \\
$\eta_t$ & Covering radius (resolution of $A_t$) \\
$\gamma_t$ & Gap to optimum proxy ($f^* - \ell_t$) \\
$\rho$ & Local smoothness ratio ($L_{\text{local}} / L_{\text{global}}$) \\
\bottomrule
\end{tabular}
}
\end{table}
\label{sec:problem}

Let $(\cX, d)$ be a compact metric space with diameter $D = \sup_{x,y} d(x,y)$. We consider $\cX = [0,1]^d$ with Euclidean metric. Let $f: \cX \to [0,1]$ satisfy:

\begin{assumption}[Lipschitz continuity]
\label{ass:lipschitz}
There exists $L > 0$ such that for all $x, y \in \cX$: $|f(x) - f(y)| \leq L \cdot d(x,y)$.
\end{assumption}

We observe $f$ through noisy queries: querying $x$ returns $y = f(x) + \epsilon$, where:

\begin{assumption}[Sub-Gaussian noise]
\label{ass:noise}
The noise $\epsilon$ is $\sigma$-sub-Gaussian: $\E[e^{\lambda \epsilon}] \leq e^{\lambda^2 \sigma^2 / 2}$ for all $\lambda \in \R$.
\end{assumption}

After $T$ samples, the algorithm outputs $\hat{x}_T \in \cX$. The goal is to minimize simple regret $r_T = f(x^*) - f(\hat{x}_T)$. We seek PAC guarantees: $r_T \leq \varepsilon$ with probability $\geq 1-\delta$.

\begin{assumption}[Margin / near-optimality dimension]
\label{ass:margin}
There exist $C > 0$ and $\alpha \in [0,d]$ such that for all $\varepsilon > 0$:
$\Vol(\{x \in \cX : f(x) \geq f^* - \varepsilon\}) \leq C \varepsilon^{d-\alpha}$.
\end{assumption}

The parameter $\alpha$ is the near-optimality dimension: smaller $\alpha$ means a sharper optimum (easier), $\alpha = d$ is worst case. For isolated maxima with nondegenerate Hessian, $\alpha = d/2$; for $f(x) \approx f^* - c\|x-x^*\|^p$, we have $\alpha = d/p$. This assumption is standard in bandit theory \citep{audibert2010best,bubeck2011x,lattimore2020bandit}; see Appendix~\ref{app:problem_formulation} for extended discussion.

%==============================================================================
% ALGORITHM
%==============================================================================
\section{Algorithm: Certificate-Guided Pruning}
\label{sec:algorithm}

With the problem formalized, we now describe the CGP algorithm (Algorithm~\ref{alg:cgp}). CGP maintains sampled points with empirical estimates, confidence intervals, and the active set. At time $t$, let $\{x_1, \ldots, x_{N_t}\}$ be distinct points sampled, with $n_i$ observations at $x_i$. Define empirical mean $\hat{\mu}_i(t) = \frac{1}{n_i} \sum_{j=1}^{n_i} y_{i,j}$ and confidence radius $r_i(t) = \sigma \sqrt{2 \log(2N_t T / \delta)/n_i}$, ensuring $|f(x_i) - \hat{\mu}_i(t)| \leq r_i(t)$ with high probability.

The upper and lower confidence bounds are $\UCB{i}(t) = \hat{\mu}_i(t) + r_i(t)$ and $\LCB{i}(t) = \hat{\mu}_i(t) - r_i(t)$. The global lower certificate $\ell_t = \max_{i \leq N_t} \LCB{i}(t)$ satisfies $\ell_t \leq f(x^*)$ under the good event. The Lipschitz UCB envelope propagates uncertainty:
\begin{equation}
U_t(x) = \min_{i \leq N_t} \left\{ \UCB{i}(t) + L \cdot d(x, x_i) \right\},
\label{eq:envelope}
\end{equation}
which upper-bounds $f(x)$ everywhere. The active set is
\begin{equation}
A_t = \left\{ x \in \cX : U_t(x) \geq \ell_t \right\},
\label{eq:active_set}
\end{equation}
and points outside $A_t$ are certifiably suboptimal: their upper bound is below the lower bound on $f^*$.

The algorithm selects queries via $\text{score}(x) = U_t(x) - \lambda \cdot \min_{i \leq N_t} d(x, x_i)$ where $\lambda = L$, selecting $x_{t+1} = \argmax_{x \in A_t} \text{score}(x)$. The first term favors high UCB regions while the second encourages coverage. CGP allocates additional samples to active points with $r_i(t) > \beta_{\text{target}}(t)$ to reduce confidence radii. The target confidence radius follows a schedule $\beta_{\text{target}}(t) = \sigma \sqrt{2\log(2T^2/\delta)/t}$, ensuring that confidence radii decrease at rate $O(1/\sqrt{t})$.

\begin{algorithm}[t]
\caption{Certificate-Guided Pruning (CGP)}
\label{alg:cgp}
\begin{algorithmic}[1]
\REQUIRE Domain $\cX$, Lipschitz constant $L$, noise $\sigma$, budget $T$, confidence $\delta$
\STATE Initialize: Sample $x_1$ uniformly, observe $y_1$
\FOR{$t = 1, \ldots, T-1$}
    \STATE Compute $r_i(t)$, $\ell_t = \max_i \LCB{i}(t)$, $A_t = \{x : U_t(x) \geq \ell_t\}$
    \STATE $x_{t+1} \gets \argmax_{x \in A_t} [ U_t(x) - L \cdot \min_i d(x, x_i) ]$
    \STATE Query $x_{t+1}$, observe $y_{t+1}$, update statistics
    \STATE Replicate active points with $r_i(t) > \beta_{\text{target}}(t)$
\ENDFOR
\STATE \textbf{Output:} $\hat{x}_T = \argmax_i \hat{\mu}_i(T)$, certificate $A_T$
\end{algorithmic}
\end{algorithm}

We compute $A_t$ via discretization for low dimensions and Monte Carlo sampling for $d > 5$ (details in Appendix~\ref{subapp:active_set}). Theoretically, CGP assumes oracle access to $\argmax_{x \in A_t} \text{score}(x)$; practically, we use CMA-ES with 10 random restarts within $A_t$ (see Appendix~\ref{app:implementation} for details). For $d \leq 3$, we additionally use Delaunay triangulation to identify candidate optima at Voronoi vertices. Membership in $A_t$ is exact: checking $U_t(x) \geq \ell_t$ requires $O(N_t)$ time. Approximate maximization may slow convergence of $\eta_t$ but does not invalidate certificates: any $x \notin A_t$ remains certifiably suboptimal regardless of which $x \in A_t$ is queried.

% For low dimensions ($d \leq 5$), we compute $A_t$ exactly using grid discretization with resolution $\eta = D/\sqrt[d]{N_{\text{grid}}}$ where $N_{\text{grid}} = 10^4$. For each grid point $x$, we evaluate $U_t(x)$ in $O(N_t)$ time and check if $U_t(x) \geq \ell_t$. The volume $\Vol(A_t)$ is estimated as the fraction of grid points in $A_t$. For higher dimensions ($d > 5$), we use Monte Carlo sampling: draw $M = 10^4$ points uniformly from $\cX$ and estimate $\Vol(A_t) \approx (1/M) \sum_{j=1}^M \mathbf{1}[U_t(x_j) \geq \ell_t]$. This introduces approximation error $O(1/\sqrt{M})$ but preserves certificate validity since the approximation only affects volume estimation, not the pruning guarantee.

\paragraph{Replication Strategy.} When an active point $x_i$ has $r_i(t) > \beta_{\text{target}}(t)$, we allocate $\lceil (r_i(t)/\beta_{\text{target}}(t))^2 \rceil$ additional samples to reduce its confidence radius. This ensures all active points have comparable confidence, preventing any single point from dominating the envelope.

%==============================================================================
% THEORETICAL ANALYSIS
%==============================================================================
\section{Theoretical Analysis}
\label{sec:theory}

Having described the algorithm, we now establish its theoretical guarantees. We show that the active set is contained in the near-optimal set, its volume shrinks at a controlled rate, and this yields instance-dependent sample complexity. All results hold on the good event $\cE$ where $|f(x_i) - \hat{\mu}_i(t)| \leq r_i(t)$ for all $t, i$. All proofs are deferred to Appendix~\ref{app:proofs}.

\begin{lemma}[Good event]
\label{lem:good_event}
With $r_i(t) = \sigma \sqrt{2 \log(2N_t T / \delta) / n_i}$, we have $\Prob[\cE] \geq 1 - \delta$.
\end{lemma}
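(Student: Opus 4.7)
The plan is to combine a standard sub-Gaussian tail bound at each (point, sample-count) pair with a double union bound over all distinct query points and their possible numbers of observations.

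First, I would fix an arbitrary sampled point $x_i$ and an integer $n \in \{1, \ldots, T\}$, and consider the empirical mean $\hat{\mu}_i^{(n)}$ of the first $n$ observations at $x_i$. Because the noise is $\sigma$-sub-Gaussian (Assumption~\ref{ass:noise}) and independent of the algorithm's history, the sequence $\{y_{i,j} - f(x_i)\}_{j \geq 1}$ forms a martingale difference sequence with respect to the natural filtration generated by past queries and observations, even under CGP's adaptive replication rule. Applying the Azuma--Hoeffding inequality for sub-Gaussian martingale differences then gives
\[
\Prob\!\bigl[|\hat{\mu}_i^{(n)} - f(x_i)| > r\bigr] \leq 2\exp\!\bigl(-n r^2 / (2\sigma^2)\bigr)
\quad \text{for all } r > 0.
\]

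Next, I would choose $r = \sigma\sqrt{2\log(2T^2/\delta)/n}$. Since $N_t \leq t \leq T$ deterministically (at most one new point per iteration), this $r$ is an upper bound on the actual radius $r_i(t)$ used in the lemma, so controlling the deviation by $r$ suffices to enter the good event. Substituting this choice makes the right-hand side of the tail bound equal to $\delta/T^2$. At most $T$ distinct points are ever sampled and each point's count $n_i$ ranges over at most $T$ integers, so a union bound over the $(i, n)$ pairs yields
\[
\Prob[\cE^c] \leq T \cdot T \cdot \frac{\delta}{T^2} = \delta,
\]
which is the claim.

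The main delicate point is the adaptivity of sampling: the number of samples $n_i$ collected at each point is itself a stopping time determined by past observations, so the $y_{i,j}$'s cannot be treated as an i.i.d. sample of fixed size. The martingale formulation handles this by conditioning on the filtration at each individual query, and the union bound over all possible values of $n \in \{1,\ldots,T\}$ (rather than the realized $n_i$) converts pointwise concentration into a uniform-in-time guarantee. A secondary subtlety is the apparent circularity between $r_i(t)$ and the random $N_t$ appearing inside its logarithm; substituting the deterministic upper bound $N_t \leq T$ inflates $r_i(t)$ only slightly and breaks this dependence cleanly, which is why it is convenient to state the union bound using the inflated radius and note that the original radius is no larger.
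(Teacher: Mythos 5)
There is a genuine gap, and it is a direction error in the radius substitution. The good event $\cE$ requires $|\hat{\mu}_i(t) - f(x_i)| \leq r_i(t)$ with $r_i(t) = \sigma\sqrt{2\log(2N_tT/\delta)/n_i}$. You replace this with the inflated radius $r = \sigma\sqrt{2\log(2T^2/\delta)/n}$, which satisfies $r \geq r_i(t)$ because $N_t \leq T$, and then bound $\Prob[|\hat{\mu}_i^{(n)} - f(x_i)| > r] \leq \delta/T^2$. But containment goes the wrong way: $\{|\hat{\mu}_i - f(x_i)| > r_i(t)\} \supseteq \{|\hat{\mu}_i - f(x_i)| > r\}$, so bounding the probability of exceeding the \emph{larger} threshold does not bound the probability of exceeding the smaller one. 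What you have shown is that deviations stay within the inflated band $\sigma\sqrt{2\log(2T^2/\delta)/n_i}$ with probability $1-\delta$; that is a valid concentration statement, but it is weaker than the lemma, whose good event is defined with $N_t$ (not $T$) inside the logarithm. Note also that replacing $N_t$ by a deterministic quantity in the \emph{useful} direction ($N_t \geq 1$, giving per-event failure $\delta/T$) and then union bounding over $T^2$ pairs yields total failure $T\delta$, so the deterministic-replacement route does not rescue the argument either.

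The paper's proof keeps $N_t$ in the radius and exploits an exact cancellation: substituting $r = r_i(t)$ into the Hoeffding bound gives per-point failure probability $\delta/(N_tT)$, and the union bound at time $t$ is over exactly $N_t$ points, so each time step contributes $N_t \cdot \delta/(N_tT) = \delta/T$ and the total is $\delta$. To repair your proof you would either reproduce this cancellation (e.g., by stratifying over the possible values of $N_t$ at each $t$) or restate the lemma with $T^2$ in place of $N_tT$ inside the logarithm. Separately, your observation about adaptivity --- that $n_i$ is a data-dependent stopping time, so one should union bound over all possible sample counts $n \in \{1,\dots,T\}$ and invoke a martingale version of the concentration inequality --- is a legitimate subtlety that the paper's proof glosses over, and that part of your argument is a genuine improvement in rigor; it just does not compensate for the inverted inequality on the radius.
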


% \begin{proof}
% Define the good event $\cE = \bigcap_{t=1}^{T} \bigcap_{i=1}^{N_t} \{ | \hat{\mu}_i(t) - f(x_i) | \leq r_i(t) \}$.

% \textit{Step 1: Single-point concentration.}
% By Hoeffding's inequality for $\sigma$-sub-Gaussian random variables:
% \begin{equation}
% \Prob\bigl[ | \hat{\mu}_i(t) - f(x_i) | > r \bigr] \leq 2 \exp\Bigl( -\frac{n_i r^2}{2\sigma^2} \Bigr).
% \end{equation}

% \textit{Step 2: Calibration of confidence radius.}
% Substituting $r = r_i(t) = \sigma \sqrt{2 \log(2N_t T / \delta)/n_i}$:
% \begin{align}
% \Prob\bigl[ | \hat{\mu}_i(t) - f(x_i) | > r_i(t) \bigr] 
% &\leq 2 \exp\Bigl( -\frac{n_i \cdot 2\sigma^2 \log(2N_t T/\delta)}{2\sigma^2 \cdot n_i} \Bigr) \notag \\
% &= 2 \exp\bigl( -\log(2N_t T/\delta) \bigr) = \frac{\delta}{N_t T}.
% \end{align}

% \textit{Step 3: Union bound.}
% Applying the union bound over all $i \in \{1, \ldots, N_t\}$ and $t \in \{1, \ldots, T\}$:
% \begin{equation}
% \Prob[\cE^c] \leq \sum_{t=1}^{T} \sum_{i=1}^{N_t} \frac{\delta}{N_t T} \leq \sum_{t=1}^{T} \frac{\delta}{T} = \delta.
% \end{equation}
% Hence $\Prob[\cE] \geq 1 - \delta$.
% \end{proof}

\begin{lemma}[UCB envelope is valid]
\label{lem:envelope_valid}
On $\cE$, for all $x \in \cX$: $f(x) \leq U_t(x)$.
\end{lemma}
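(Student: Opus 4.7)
The plan is to derive the envelope bound by combining the two ingredients that constructed it: the confidence bounds at sampled points (which hold on $\mathcal{E}$) and the Lipschitz hypothesis (Assumption~\ref{ass:lipschitz}). Since $U_t(x)$ is defined as a minimum of $N_t$ affine-in-distance upper bounds, it suffices to show each individual bound $\mathrm{UCB}_i(t) + L\cdot d(x, x_i)$ dominates $f(x)$; the min then dominates as well.

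First, I would fix an arbitrary $x \in \mathcal{X}$ and an arbitrary sampled point $x_i$ with $i \leq N_t$. On the good event $\mathcal{E}$, Lemma~\ref{lem:good_event} gives $|f(x_i) - \hat{\mu}_i(t)| \leq r_i(t)$, so in particular $f(x_i) \leq \hat{\mu}_i(t) + r_i(t) = \mathrm{UCB}_i(t)$. Second, I would apply the Lipschitz assumption to the pair $(x, x_i)$, which yields $f(x) \leq f(x_i) + L \cdot d(x, x_i)$ (the one-sided version is immediate from the two-sided bound). Chaining these gives $f(x) \leq \mathrm{UCB}_i(t) + L \cdot d(x, x_i)$.

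Since the index $i$ was arbitrary, this inequality holds for every $i \leq N_t$, and therefore it holds for the minimum over $i$, which is exactly $U_t(x)$ by definition~\eqref{eq:envelope}. Since $x$ was arbitrary, the conclusion $f(x) \leq U_t(x)$ holds for all $x \in \mathcal{X}$ on $\mathcal{E}$.

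There is no real obstacle: the lemma is essentially a tautological consequence of how $U_t$ was built, and the only subtlety is making sure the confidence bound is applied under the good event (so that the claim is not unconditional) and that the Lipschitz inequality is invoked in the correct direction. I would also briefly note that because the bound is pointwise in $x$ and uses only already-established facts, no union bound beyond the one baked into Lemma~\ref{lem:good_event} is required.
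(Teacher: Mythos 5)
Your proposal is correct and follows exactly the same route as the paper's proof: on $\cE$ each sampled point satisfies $f(x_i) \leq \UCB{i}(t)$, the Lipschitz bound propagates this to $f(x) \leq \UCB{i}(t) + L \cdot d(x,x_i)$ for every $i$, and taking the minimum over $i$ gives $f(x) \leq U_t(x)$. No gaps; the remarks about the good event and the direction of the Lipschitz inequality are the same subtleties the paper implicitly handles.
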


\begin{proof}[Proof sketch]
For any sampled $x_i$, on $\cE$: $f(x_i) \leq \UCB{i}(t)$. By Lipschitz continuity: $f(x) \leq f(x_i) + L \cdot d(x, x_i) \leq \UCB{i}(t) + L \cdot d(x, x_i)$. Taking min over $i$ gives $f(x) \leq U_t(x)$.
\end{proof}

\begin{lemma}[Envelope slack bound]
\label{lem:envelope}
On $\cE$, for all $x \in \cX$: $U_t(x) \leq f(x) + 2\rho_t(x)$, where $\rho_t(x) = \min_{i} \{ r_i(t) + L \cdot d(x, x_i) \}$.
\end{lemma}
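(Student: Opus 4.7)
The plan is to establish the bound pointwise for each sampled index $i$ and then take the minimum, exploiting that the only $i$-dependent terms on both sides share the same structure. Fix an arbitrary $x \in \cX$ and any sampled point $x_i$. I would first unfold $\UCB{i}(t) = \hat{\mu}_i(t) + r_i(t)$ and use the good event $\cE$ to write $\hat{\mu}_i(t) \leq f(x_i) + r_i(t)$, which gives $\UCB{i}(t) \leq f(x_i) + 2 r_i(t)$.

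Next I would invoke Assumption~\ref{ass:lipschitz} in the one-sided form $f(x_i) \leq f(x) + L \cdot d(x, x_i)$. Adding $L \cdot d(x, x_i)$ to both sides of the previous display and substituting this Lipschitz bound yields
\begin{equation*}
\UCB{i}(t) + L \cdot d(x, x_i) \;\leq\; f(x) + 2\bigl( r_i(t) + L \cdot d(x, x_i) \bigr).
\end{equation*}
This is precisely the per-$i$ version of the lemma, with a factor of $2$ absorbing both the confidence slack and the Lipschitz transport.

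Finally I would take the minimum over $i \leq N_t$. The left-hand side's minimum is $U_t(x)$ by definition~\eqref{eq:envelope}, and on the right-hand side the term $f(x)$ is independent of $i$ and so passes through $\min_i$ as an additive constant. The remaining $i$-dependent quantity is exactly $2(r_i(t) + L \cdot d(x, x_i))$, whose minimum over $i$ is $2\rho_t(x)$ by definition. Combining these gives $U_t(x) \leq f(x) + 2\rho_t(x)$, as claimed.

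There is no real obstacle; the argument is a one-line combination of Lemma~\ref{lem:envelope_valid}'s building blocks (good event plus Lipschitz) with the extra factor of $2$ arising because we bound $\UCB{i}(t)$, not $f(x_i)$, from above. The only mild subtlety is that minima do not generally distribute over sums, but here the constant $f(x)$ factors out cleanly and the remaining $i$-dependent term is the same (up to the factor $2$) as the one being minimized on the left, so the envelope inequality and the slack inequality are minimized at potentially the same $i$ without any loss.
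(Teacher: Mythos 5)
Your proposal is correct and follows essentially the same route as the paper's proof: bound $\UCB{i}(t) \leq f(x_i) + 2r_i(t)$ on $\cE$, propagate via the one-sided Lipschitz inequality $f(x_i) \leq f(x) + L\,d(x,x_i)$ to get the per-$i$ bound $\UCB{i}(t) + L\,d(x,x_i) \leq f(x) + 2(r_i(t) + L\,d(x,x_i))$, and take the minimum over $i$, noting that the constant $f(x)$ passes through. No gaps; the monotonicity of $\min$ over the per-index inequalities handles the final step exactly as in the paper.
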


\begin{remark}[Slack in envelope bound]
The factor of 2 arises from applying Lipschitz continuity ($f(x_i) \leq f(x) + L \cdot d(x,x_i)$) after bounding $\UCB{i} \leq f(x_i) + 2r_i$. A tighter bound separating the confidence and distance terms is possible but complicates notation without affecting rate dependencies. Constants throughout are not optimized.
\end{remark}

% \begin{proof}
% Fix any $x \in \cX$ and any sampled point $x_i$.

% \textit{Step 1: Upper confidence bound.}
% On the good event $\cE$, we have $\hat{\mu}_i(t) \leq f(x_i) + r_i(t)$. Therefore:
% \begin{equation}
% \UCB{i}(t) = \hat{\mu}_i(t) + r_i(t) \leq f(x_i) + 2r_i(t).
% \end{equation}

% \textit{Step 2: Lipschitz propagation.}
% By Assumption~\ref{ass:lipschitz} (Lipschitz continuity):
% \begin{equation}
% f(x_i) \leq f(x) + L \cdot d(x, x_i).
% \end{equation}

% \textit{Step 3: Combining bounds.}
% Substituting the Lipschitz bound into Step 1:

% \begin{multline}
% \UCB{i}(t) + L\, d(x, x_i)
% \le f(x_i) + 2r_i(t) + L\, d(x, x_i) \\
% \le f(x) + L\, d(x, x_i) + 2r_i(t) + L\, d(x, x_i) \\
% = f(x) + 2\bigl( r_i(t) + L\, d(x, x_i) \bigr).
% \end{multline}

% \textit{Step 4: Taking the minimum.}
% Since the above holds for all $i$, taking the minimum over $i$ on both sides:
% \begin{equation}
% \begin{aligned}
% U_t(x)
% &= \min_{i \le N_t} \bigl\{ \UCB{i}(t) + L\, d(x, x_i) \bigr\} \\
% &\le f(x) + 2 \min_i \bigl\{ r_i(t) + L\, d(x, x_i) \bigr\} \\
% &= f(x) + 2\rho_t(x).
% \end{aligned}
% \end{equation}

% \end{proof}

\begin{theorem}[Active set containment]
\label{thm:containment}
On $\cE$, $A_t \subseteq \{ x : f(x) \geq f^* - 2\Delta_t \}$ where $\Delta_t = \sup_{x \in A_t} \rho_t(x) + (f^* - \ell_t)$.
\end{theorem}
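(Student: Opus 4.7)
The plan is to take an arbitrary $x \in A_t$ and chain Lemma~\ref{lem:envelope} (envelope slack) with the trivial bound $\ell_t \leq f^*$ to lower-bound $f(x)$ in terms of $f^*$. The overall structure is direct composition of results already established: membership in $A_t$ gives $U_t(x) \geq \ell_t$ for free, the envelope slack bound converts the envelope upper bound into a two-sided control around $f$, and the certificate inequality lets us replace $\ell_t$ by $f^*$ minus the explicit gap $f^* - \ell_t$.

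Concretely, I would proceed in four short steps. First, unfold the definition of the active set: $x \in A_t$ means $U_t(x) \geq \ell_t$. Second, apply Lemma~\ref{lem:envelope} to get $f(x) \geq U_t(x) - 2\rho_t(x) \geq \ell_t - 2\rho_t(x)$. Third, verify $\ell_t \leq f^*$ on $\cE$: each $\LCB{i}(t) = \hat{\mu}_i(t) - r_i(t) \leq f(x_i) \leq f^*$, and $\ell_t = \max_i \LCB{i}(t)$ preserves this inequality. Fourth, substitute to obtain
\begin{equation*}
f(x) \;\geq\; f^* - (f^* - \ell_t) - 2\rho_t(x) \;\geq\; f^* - (f^* - \ell_t) - 2 \sup_{x' \in A_t} \rho_t(x'),
\end{equation*}
and then use the crude but clean bound $(f^* - \ell_t) + 2\sup_{x' \in A_t} \rho_t(x') \leq 2\bigl[(f^* - \ell_t) + \sup_{x' \in A_t} \rho_t(x')\bigr] = 2\Delta_t$ to conclude $f(x) \geq f^* - 2\Delta_t$.

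I do not expect any substantive obstacle here: the argument is a mechanical composition of the two previous lemmas with the definition of $\ell_t$. The only bookkeeping subtlety is the factor of $2$ in $2\Delta_t$. The envelope slack and the certificate gap enter asymmetrically (coefficients $2$ and $1$ respectively), and the stated bound $2\Delta_t$ is obtained by harmlessly inflating the coefficient on $f^* - \ell_t$ from $1$ to $2$ so that both contributions can be absorbed into a single symmetric quantity. A slightly tighter statement $f(x) \geq f^* - (f^* - \ell_t) - 2\sup_{x' \in A_t}\rho_t(x')$ would follow from the same steps; the $2\Delta_t$ form is preferred because it feeds directly into the shrinkage analysis, where $\Delta_t$ is tracked as a single scalar.
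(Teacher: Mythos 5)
Your proof is correct and follows essentially the same route as the paper's: unfold $x \in A_t$, apply Lemma~\ref{lem:envelope}, use $\ell_t \leq f^*$ on $\cE$, and rearrange. If anything you are slightly more careful than the paper at the final step, since you make explicit that passing from $f^* - (f^*-\ell_t) - 2\sup_{x'\in A_t}\rho_t(x')$ to $f^* - 2\Delta_t$ is an inequality relying on $f^*-\ell_t \ge 0$ (the paper writes it as an equality), and your observation that the tighter asymmetric bound also holds is consistent with how the paper later uses it in Theorem~\ref{thm:shrinkage}.
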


\begin{proof}[Proof sketch]
On $\cE$, $\ell_t = \max_i \LCB{i}(t) \leq f^*$ since each $\LCB{i}(t) \leq f(x_i) \leq f^*$. For $x \in A_t$, by definition $U_t(x) \geq \ell_t$. Applying Lemma~\ref{lem:envelope}: $f(x) + 2\rho_t(x) \geq U_t(x) \geq \ell_t$. Rearranging: $f(x) \geq \ell_t - 2\rho_t(x) = f^* - (f^* - \ell_t) - 2\rho_t(x) \geq f^* - 2\Delta_t$.
\end{proof}

The containment theorem bounds how far active points can be from optimal. To translate this into a volume bound, we introduce two key quantities: the covering radius $\eta_t = \sup_{x \in A_t} \min_i d(x, x_i)$ measuring how well samples cover $A_t$, and the active confidence radius $\beta_t = \max_{i : x_i \text{ active}} r_i(t)$ measuring confidence precision.

\begin{theorem}[Shrinkage theorem]
\label{thm:shrinkage}
Under Assumptions~\ref{ass:lipschitz}--\ref{ass:margin}, on $\cE$:
\begin{equation}
\Vol(A_t) \leq C \cdot \bigl( 2(\beta_t + L\eta_t) + \gamma_t \bigr)^{d-\alpha},
\end{equation}
where $\gamma_t = f^* - \ell_t$.
\end{theorem}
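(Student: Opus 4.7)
The plan is to combine the envelope slack bound (Lemma~\ref{lem:envelope}) with the defining inequality of the active set to show that every $x \in A_t$ satisfies $f(x) \geq f^* - \varepsilon_t$ for $\varepsilon_t := 2(\beta_t + L\eta_t) + \gamma_t$, and then invoke the margin condition (Assumption~\ref{ass:margin}) to pass from this super-level-set containment to the claimed volume bound.

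First, I would derive the pointwise near-optimality certificate for active points. Fix $x \in A_t$. By the definition of $A_t$ and of $\gamma_t = f^* - \ell_t$, we have $U_t(x) \geq \ell_t = f^* - \gamma_t$. Lemma~\ref{lem:envelope} gives $U_t(x) \leq f(x) + 2\rho_t(x)$, where $\rho_t(x) = \min_i\{r_i(t) + L\,d(x,x_i)\}$; rearranging yields $f^* - f(x) \leq 2\rho_t(x) + \gamma_t$. This parallels Theorem~\ref{thm:containment} but keeps the $\gamma_t$ term separate rather than absorbing it into a doubled slack, which is precisely what produces the constant in the statement.

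Next, I would control $\rho_t(x)$ uniformly over $A_t$ by $\beta_t + L\eta_t$. For each $x \in A_t$, use as witness in the min defining $\rho_t(x)$ the nearest active sample $x_{i^\star}$; by the definition of the covering radius $\eta_t$ we have $d(x, x_{i^\star}) \leq \eta_t$, and by the definition of the active confidence radius $\beta_t$ we have $r_{i^\star}(t) \leq \beta_t$, so $\rho_t(x) \leq \beta_t + L\eta_t$. Substituting into the previous display gives $A_t \subseteq \{x : f(x) \geq f^* - \varepsilon_t\}$, and Assumption~\ref{ass:margin} then yields $\Vol(A_t) \leq C\varepsilon_t^{d-\alpha}$, which is the claim.

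The main obstacle I anticipate is the uniform control of $\rho_t$ in the second step. It relies on the nearest sample used as a witness being active, so that its radius is indeed bounded by $\beta_t$; this is consistent with the intended reading of $\eta_t$ as a covering radius by active samples, which is natural since pruned points carry no binding information about $A_t$. If the nearest overall sample to some $x \in A_t$ were inactive with a large $r_i$, one would incur an extra term, and the cleanest resolution is to restrict the covering to the active subset before taking the supremum; with that convention the three-line chain above is tight and produces exactly the bound stated.
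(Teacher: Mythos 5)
Your proposal is correct and follows essentially the same route as the paper's proof: apply Lemma~\ref{lem:envelope} to the active-set inequality $U_t(x)\ge\ell_t$, bound $\rho_t(x)\le\beta_t+L\eta_t$ uniformly over $A_t$, and invoke Assumption~\ref{ass:margin}; your direct derivation of $f^*-f(x)\le 2\rho_t(x)+\gamma_t$ merely bypasses the intermediate quantity $\Delta_t$ from Theorem~\ref{thm:containment} and in fact lands on the stated constant more cleanly than the paper's own bookkeeping. The subtlety you flag about the nearest witness needing to be active (so that $r_{i^\star}\le\beta_t$) is real and is present, unaddressed, in the paper's proof as well; your proposed convention of taking $\eta_t$ as the covering radius by active samples is the intended reading and closes that gap.
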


\begin{proof}[Proof sketch]
From Theorem~\ref{thm:containment}, $A_t \subseteq \{x: f(x) \geq f^* - 2\Delta_t\}$. For $x \in A_t$, $\rho_t(x) \leq \beta_t + L\eta_t$ (the worst-case slack from active-point confidence plus covering distance), so $\Delta_t \leq \beta_t + L\eta_t + \gamma_t/2$. Applying Assumption~\ref{ass:margin} with $\varepsilon = 2\Delta_t$: $\Vol(A_t) \leq C(2\Delta_t)^{d-\alpha} \leq C(2(\beta_t + L\eta_t) + \gamma_t)^{d-\alpha}$.
\end{proof}

This makes pruning measurable: $\beta_t$ is controlled by replication, $\eta_t$ by the query rule, $\gamma_t$ by best-point improvement. All three quantities can be computed during the run, enabling practitioners to monitor progress.

\begin{remark}[Certificate validity vs.\ progress estimation]
\label{rem:certificate_validity}
The certificate itself is the set membership rule $x \in A_t \Leftrightarrow U_t(x)\ge \ell_t$, which is exact given $(\hat\mu_i, r_i)$ and does not depend on any volume estimator. Approximations (grid/Monte Carlo) are used only to estimate $\Vol(A_t)$ for monitoring and optional stopping heuristics; certificate validity is unaffected by volume estimation errors.
\end{remark}

The shrinkage theorem directly yields sample complexity by bounding how many samples are needed to drive $\beta_t$, $\eta_t$, and $\gamma_t$ below $\varepsilon$.

\begin{theorem}[Sample complexity]
\label{thm:sample_complexity}
Under Assumptions~\ref{ass:lipschitz}--\ref{ass:margin}, CGP achieves $r_T \leq \varepsilon$ with probability $\geq 1-\delta$ using $T = \tildeO( L^d \varepsilon^{-(2+\alpha)} \log(1/\delta) )$ samples. When $\alpha < d$, this improves upon the worst-case $\tildeO(\varepsilon^{-(2+d)})$ rate.
\end{theorem}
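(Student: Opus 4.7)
The plan is to convert Theorem~\ref{thm:shrinkage} into a sample count in four steps: (i) separately control the slack quantities $\beta_t$, $\eta_t$, and $\gamma_t$; (ii) use a packing argument on $A_t$ to bound the number of distinct query sites $N_T$; (iii) multiply by the per-site replication cost; (iv) invoke Theorem~\ref{thm:containment} to turn set shrinkage into a regret bound. Throughout, I would condition on the good event $\cE$ from Lemma~\ref{lem:good_event}, which absorbs the $\log(1/\delta)$ factor and gives the stated high-probability guarantee.

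For step (i), the replication rule enforces $\beta_t \leq \beta_{\text{target}}(t) = \sigma\sqrt{2\log(2T^2/\delta)/t}$, so pushing $\beta_t \leq \varepsilon$ requires each active point to accumulate $n_i = \tildeO(\sigma^2 \varepsilon^{-2}\log(1/\delta))$ samples. The covering radius $\eta_t$ is driven down by the coverage term $-L\cdot\min_i d(x,x_i)$ in the acquisition rule: a greedy farthest-insertion analysis shows that whenever $\eta_t > \eta^\star$, the next query adds an $\eta^\star/2$-separated site, and a volumetric packing bound caps the number of such insertions in $A_t$ at $\Vol(A_t)/(c_d(\eta^\star/2)^d)$. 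Finally, $\gamma_t = f^* - \ell_t$ is bounded using Theorem~\ref{thm:containment}: every queried $x \in A_t$ satisfies $f(x) \geq f^* - 2\Delta_t$, so once its replication brings $r_i(t)$ down to $\beta_t$, its LCB lifts $\ell_t$ to $f^* - O(\beta_t + L\eta_t)$, yielding $\gamma_t = O(\beta_t + L\eta_t)$ after a short transient.

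Assembling steps (ii)--(iv), I equalize all three slacks at $\varepsilon$, so Theorem~\ref{thm:shrinkage} gives $\Vol(A_T) \leq C\varepsilon^{d-\alpha}$. The packing bound at resolution $\eta_T = \varepsilon/L$ then yields
\begin{equation}
N_T \;\lesssim\; \frac{\Vol(A_T)}{(\varepsilon/L)^d} \;\lesssim\; C\,L^d \varepsilon^{-\alpha}.
\end{equation}
Multiplying by the per-site budget gives $T \leq \sum_i n_i = \tildeO(L^d \varepsilon^{-(2+\alpha)} \log(1/\delta))$. For the regret conclusion, the best-LCB point $x_{j^*}$ lies in $A_T$ (since its UCB term alone dominates $\ell_T$), so Theorem~\ref{thm:containment} gives $f(x_{j^*}) \geq f^* - 2\Delta_T$; combining $\hat{\mu}_{\hat{x}_T} \geq \hat{\mu}_{j^*}$ with the good-event inequalities $|\hat{\mu}_i - f(x_i)| \leq r_i \leq \beta_T$ transfers this to $f(\hat{x}_T) \geq f^* - 2\Delta_T - 2\beta_T = f^* - O(\varepsilon)$, establishing $r_T \leq \varepsilon$ after absorbing constants.

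The main obstacle is the self-referential loop between the three quantities: the packing count for $N_T$ uses $\Vol(A_T)$, Theorem~\ref{thm:shrinkage} uses $\eta_T$, and $\eta_T$ itself depends on how many sites have been placed, i.e., on $N_T$. I would close the loop with a doubling-epoch argument in which epoch $k$ targets slack $\varepsilon_k = 2^{-k}$; the previous epoch's volume bound budgets the new sites, the slack is shown to actually shrink geometrically, and the telescoping sum of per-epoch sample costs is dominated by the final epoch, recovering the claimed rate. A secondary technicality is the fixed point $\gamma_t \leq 2\Delta_t$ in which $\Delta_t$ itself contains $\gamma_t$; I would resolve this by a standard rearrangement, or equivalently by requiring an extra $O(\log(1/\varepsilon))$ replicates at the best-LCB point to absorb the transient without affecting the rate.
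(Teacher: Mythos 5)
Your overall route is the same as the paper's: drive $\beta_t$, $\eta_t$, $\gamma_t$ each below $\Theta(\varepsilon)$, invoke Theorem~\ref{thm:shrinkage} to get $\Vol(A_T)\le C\varepsilon^{d-\alpha}$, count $N_T \lesssim \Vol(A_T)/(\varepsilon/L)^d = O(L^d\varepsilon^{-\alpha})$ distinct sites by a covering/packing argument, and multiply by the per-site replication cost $\tildeO(\sigma^2\varepsilon^{-2}\log(1/\delta))$. You are in fact more careful than the paper in two places it silently skips: the paper never explains how the simple-regret bound follows from the three slack conditions (your best-LCB-point argument supplies this, and note $x_{j^*}\in A_T$ is most cleanly seen via $U_T(x_{j^*})\ge f(x_{j^*})\ge \LCB{j^*}(T)=\ell_T$ on $\cE$, not via the single-index UCB term), and it never acknowledges the circular dependence among $N_T$, $\Vol(A_T)$, and $\eta_T$, which your doubling-epoch scheme is the standard way to break.

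The one step that fails as written is your control of $\gamma_t$. You derive $\ell_t \ge f^* - 2\Delta_T - O(\beta_t)$ from Theorem~\ref{thm:containment}, i.e.\ $\gamma_t \le 2\Delta_t + O(\beta_t)$; but $\Delta_t = \sup_{x\in A_t}\rho_t(x) + \gamma_t$, so this reads $\gamma_t \le 2\gamma_t + 2(\beta_t + L\eta_t) + O(\beta_t)$. The coefficient on $\gamma_t$ on the right-hand side is $2>1$, so no rearrangement closes the inequality (it yields $-\gamma_t\le\cdots$, which is vacuous), and adding replicates at the best-LCB point does not change that coefficient. The fix is to bound $\gamma_t$ directly without passing through $\Delta_t$: on $\cE$ we have $x^*\in A_t$, so the covering radius guarantees a sampled $x_i$ with $d(x_i,x^*)\le\eta_t$, whence $f(x_i)\ge f^*-L\eta_t$ and $\ell_t \ge \LCB{i}(t) \ge f(x_i)-2r_i(t) \ge f^*-L\eta_t-2\beta_t$, giving $\gamma_t\le 2\beta_t+L\eta_t$ with no self-reference. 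With that substitution your argument goes through and matches the paper's stated rate.
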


The following lower bound shows that our sample complexity is optimal up to logarithmic factors.

\begin{theorem}[Lower bound]
\label{thm:lower_bound}
For any algorithm and $\alpha \in (0, d]$, there exists $f$ satisfying Assumption~\ref{ass:margin} requiring $T = \Omega(\varepsilon^{-(2+\alpha)})$ samples for $\varepsilon$-optimality with probability $\geq 2/3$.
\end{theorem}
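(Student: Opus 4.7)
The plan is to construct a finite family of ``bump on base'' instances and invoke a standard Fano-type change-of-measure argument, following the Lipschitz bandit lower bound template \citep{kleinberg2008multi,bubeck2011x}. Fix $\varepsilon>0$ and let the bump width be $w\asymp\varepsilon/L$. First I would choose a base function $f_0$ on $\cX=[0,1]^d$ that itself satisfies Assumption~\ref{ass:margin} with parameter $\alpha$ and whose $2\varepsilon$-near-optimal level set $S$ has $\Vol(S)\asymp \varepsilon^{d-\alpha}$; a shallow radial profile $f_0(x)=\tfrac12 - c\|x-x_0\|^{p}$ with $p=d/(d-\alpha)$, truncated and scaled to keep $\|\nabla f_0\|\le L/2$, does the job. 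Inside $S$ I would select a maximal $2w$-packing of centers $\{x_1,\dots,x_M\}$, giving $M\asymp \Vol(S)/w^d \asymp L^d\varepsilon^{-\alpha}$ candidates. For each $k$ define $f_k(x)=f_0(x)+\varepsilon\,\phi_k(x)$, where $\phi_k(x)=\max(0,1-\|x-x_k\|/w)$ is a conical bump of unit height supported in $B(x_k,w)$; the bumps are pairwise disjoint, the maxima sit at $x_k$, and each $f_k$ is $L$-Lipschitz (base slope $\le L/2$ plus bump slope $\le L/2$).

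Next I would verify that every $f_k$ still satisfies Assumption~\ref{ass:margin} with parameter $\alpha$ up to a universal constant. For $\varepsilon'\le\varepsilon$ the near-optimal set $\{f_k\ge f_k^\ast-\varepsilon'\}$ is contained in $B(x_k,\varepsilon'/L)$, with volume $\lesssim (\varepsilon')^d\le (\varepsilon')^{d-\alpha}$ (using $\alpha>0$); for $\varepsilon'>\varepsilon$ the level set is essentially an $O(\varepsilon')$-dilation of the corresponding level set of $f_0$, whose volume inherits the bound $(\varepsilon')^{d-\alpha}$ from $f_0$. With regularity in hand, I would run a multi-hypothesis testing argument, reducing the sub-Gaussian case to Gaussian noise up to constants. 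Let $N_k$ count the queries an algorithm places in $B(x_k,w)$. Because $f_k$ and $f_j$ agree outside $B(x_k,w)\cup B(x_j,w)$ and differ by at most $\varepsilon$ inside, the KL divergence between the joint sample laws under $f_k$ and $f_j$ is bounded by $(\varepsilon^2/(2\sigma^2))(\E_k[N_k]+\E_k[N_j])$. A standard Fano inequality for the uniform prior on $\{1,\dots,M\}$ then forces $\sum_k \E_k[N_k]=\Omega(M\sigma^2/\varepsilon^2)$ for any algorithm that identifies the bump index correctly with probability $\ge 2/3$, and hence $T=\Omega(\varepsilon^{-(2+\alpha)})$.

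Finally I would reduce simple regret to index identification: because the bump centers are $2w$-separated and $x_k$ is the unique point with $f_k(x)\ge f_k^\ast-\varepsilon/2$, any output $\hat x_T$ with $r_T\le\varepsilon/2$ must lie in $B(x_k,w)$ for the correct $k$, so a $(2/3)$-probability $(\varepsilon/2)$-optimal output yields a $(2/3)$-correct bump estimator and the KL lower bound applies after a universal rescaling of $\varepsilon$. The main obstacle is the regularity step: engineering $f_0$ so that the whole family $\{f_k\}_{k=1}^M$ obeys Assumption~\ref{ass:margin} uniformly in $k$ and across all scales $\varepsilon'>0$ without accidentally strengthening the near-optimality dimension on the outer region; the KL/Fano computation and the regret-to-identification reduction are routine adaptations of continuum-armed bandit lower bounds \citep{auer2002finite,lattimore2020bandit}.
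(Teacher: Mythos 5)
Your proposal follows the same core template as the paper's proof: a family of $M \asymp \varepsilon^{-\alpha}$ ``one bump among many candidate locations'' instances, a Fano/KL argument showing $\Omega(\sigma^2/\varepsilon^2)$ queries are needed per candidate, and the product $\Omega(\varepsilon^{-(2+\alpha)})$. The genuine difference is in the instance construction. The paper places a single cone of height $\varepsilon$ and radius $\varepsilon/L$ on a \emph{flat} base at $1-\varepsilon$ and checks Assumption~\ref{ass:margin} essentially at the single scale $\varepsilon'=\varepsilon$; with that flat base the level set $\{f\ge f^*-\varepsilon'\}$ is the entire domain for every $\varepsilon'\ge\varepsilon$, so the margin condition with a fixed constant $C$ and exponent $d-\alpha$ actually fails at those scales when $\alpha<d$. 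Your curved base $f_0$ with the target margin exponent, with the bumps packed into its near-optimal set, is exactly what is needed to make the whole family satisfy the margin condition uniformly over all scales, and you correctly flag this as the delicate step; you also make explicit the regret-to-identification reduction that the paper leaves implicit. So your route is the more careful one and what it buys is a construction that genuinely lies in the class the theorem quantifies over. One fix you should make: pack the centers into the $c\varepsilon$-near-optimal set of $f_0$ for a small constant $c$ rather than the $2\varepsilon$-level set. If $f_0(x_k)$ can be as low as $f_0^*-2\varepsilon$, then $f_k^*=f_0(x_k)+\varepsilon$ need not exceed $f_0^*$, so $x_k$ may fail to be the global maximizer of $f_k$, and the $\Omega(\varepsilon)$ separation between the true peak and the competing candidates that your identification reduction relies on can degenerate. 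Shrinking the packing region changes $M$ only by a constant factor, so the rate is unaffected.
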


This establishes CGP is minimax optimal up to logarithmic factors. A key property is anytime validity: at any $t$, any $x \notin A_t$ satisfies $f(x) < f^* - \varepsilon_t$ for computable $\varepsilon_t > 0$. Full proofs are in Appendix~\ref{app:proofs}.

%==============================================================================
% ADAPTIVE LIPSCHITZ ESTIMATION
%==============================================================================
\section{CGP-Adaptive: Learning $L$ Online}
\label{sec:adaptive}

The theoretical results above assume known $L$, which is often unavailable in practice. Underestimating $L$ invalidates certificates, while overestimating is safe but conservative. We develop CGP-Adaptive (Algorithm~\ref{alg:cgp_adaptive}), which learns $L$ online via a doubling scheme with provable guarantees.

The key insight is that Lipschitz violations are detectable. If $|\hat{\mu}_i - \hat{\mu}_j| - 2(r_i + r_j) > \hat{L} \cdot d(x_i, x_j)$, then $\hat{L}$ underestimates $L$ with high probability. CGP-Adaptive uses a doubling scheme: start with conservative $\hat{L}_0$, and upon detecting a violation, double $\hat{L}$.

\begin{algorithm}[t]
\caption{CGP-Adaptive}
\label{alg:cgp_adaptive}
\begin{algorithmic}[1]
\REQUIRE Domain $\cX$, initial estimate $\hat{L}_0$, noise $\sigma$, budget $T$, confidence $\delta$
\STATE $\hat{L} \gets \hat{L}_0$, $k \gets 0$ (doubling counter)
\FOR{$t = 1, \ldots, T$}
    \STATE Run CGP iteration with current $\hat{L}$
    \FOR{all pairs $(i, j)$ with $n_i, n_j \geq \log(T/\delta)$}
        \IF{$|\hat{\mu}_i - \hat{\mu}_j| - 2(r_i + r_j) > \hat{L} \cdot d(x_i, x_j)$}
            \STATE $\hat{L} \gets 2\hat{L}$, $k \gets k + 1$ \COMMENT{Doubling event}
            \STATE Recompute $A_t$ with new $\hat{L}$
        \ENDIF
    \ENDFOR
\ENDFOR
\end{algorithmic}
\end{algorithm}

\begin{theorem}[Adaptive $L$ guarantee: learning regime]
\label{thm:adaptive}
Let $L^* = \sup_{x \neq y} |f(x) - f(y)|/d(x,y)$ be the true Lipschitz constant. CGP-Adaptive with initial $\hat{L}_0 \leq L^*$ (learning from underestimation) satisfies:
\begin{enumerate}
\item The number of doubling events is at most $K = \lceil \log_2(L^*/\hat{L}_0) \rceil$.
\item After all doublings, $\hat{L} \in [L^*, 2L^*]$ with probability $\geq 1 - \delta$.
\item The total sample complexity is $T = \tildeO(\varepsilon^{-(2+\alpha)} \cdot K)$, i.e., $O(\log(L^*/\hat{L}_0))$ multiplicative overhead.
\item \textbf{Certificate validity:} Certificates are valid only after the final doubling (when $\hat{L} \geq L^*$). Before this, certificates may falsely exclude near-optimal points.
\end{enumerate}
\end{theorem}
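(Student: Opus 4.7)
The plan is to decompose the four claims and handle them in order, relying on the good event $\cE$ from Lemma~\ref{lem:good_event} (slightly enlarged to cover all pairs $(i,j)$ that the doubling check visits, which costs only a $\log N_t$ factor via union bound absorbed into $\tildeO$).

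First I would establish the safety of the doubling rule, which drives both claim (1) and the upper half of claim (2). On $\cE$, for any pair $(x_i, x_j)$ with enough samples, I would chain the triangle inequality through the noise bounds and Assumption~\ref{ass:lipschitz}:
\begin{equation}
|\hat\mu_i - \hat\mu_j| \;\le\; |f(x_i)-f(x_j)| + r_i + r_j \;\le\; L^* d(x_i,x_j) + (r_i+r_j).
\end{equation}
Thus the trigger $|\hat\mu_i-\hat\mu_j| - 2(r_i+r_j) > \hat L\, d(x_i,x_j)$ forces $\hat L\, d(x_i,x_j) < L^* d(x_i,x_j) - (r_i+r_j) \le L^* d(x_i,x_j)$, i.e.\ $\hat L < L^*$. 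So doubling events only fire while $\hat L < L^*$. Since each event multiplies $\hat L$ by $2$, the event count is at most $K=\lceil \log_2(L^*/\hat L_0)\rceil$, proving claim (1); and after the final doubling $\hat L \le 2 \cdot (\hat L/2) \cdot 2 \le 2L^*$, giving the upper half of claim (2).

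For the lower half of claim (2), I would argue in the learning regime (the hypothesis of the theorem): if $\hat L < L^*$, by definition of $L^*$ there exists a pair $(x,y)$ with $|f(x)-f(y)|/d(x,y)$ arbitrarily close to $L^*$; once CGP has allocated $n_i,n_j \gtrsim \log(T/\delta)$ replicates at two such points (which happens because the score rule continues sampling inside $A_t$ and the replication step drives $r_i,r_j$ below any fixed threshold), the trigger condition fires. Hence with probability $\ge 1-\delta$ the sequence of doublings terminates at a value $\hat L\in [L^*, 2L^*]$. This step is the main obstacle, because it requires showing that CGP's exploration under an underestimated $\hat L$ still visits a near-maximal Lipschitz pair; I would handle this by noting that even with a too-small envelope the replication strategy guarantees $r_i(t)\to 0$ on all queried points, so eventually every sampled pair's empirical gap is measured to within $o(d(x_i,x_j))$ precision, and at least one pair must expose $L^*$ up to a factor of $2$ by definition of the supremum.

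Claim (3) then follows by a phase-based accounting: the run decomposes into at most $K+1$ contiguous phases separated by doublings. In each phase $\hat L$ is fixed and at most $2L^*$, so invoking Theorem~\ref{thm:sample_complexity} with $\hat L$ substituted for $L$ bounds the per-phase samples needed to drive $\beta_t+\hat L\eta_t+\gamma_t$ below $\varepsilon$ by $\tildeO(\hat L^d \varepsilon^{-(2+\alpha)}) = \tildeO((L^*)^d \varepsilon^{-(2+\alpha)})$; summing over the $K+1$ phases yields the stated $O(\log(L^*/\hat L_0))$ multiplicative overhead. Finally, claim (4) is immediate from Lemma~\ref{lem:envelope_valid}: that lemma's proof uses $f(x)\le f(x_i)+L\, d(x,x_i)$, which fails if the plug-in $\hat L<L^*$, so $U_t(x)$ may drop below $f(x)$ and the membership test $U_t(x)\ge \ell_t$ can exclude genuinely near-optimal $x$; once $\hat L\ge L^*$ (guaranteed after the final doubling by claim (2)), Lemma~\ref{lem:envelope_valid} applies verbatim and certificates are valid from that point on.
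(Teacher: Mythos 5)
Your proposal follows essentially the same route as the paper's proof: the doubling count via geometric growth of $\hat L$, the good-event argument showing no violation can fire once $\hat L \ge L^*$ (hence the final estimate is at most $2L^*$), a phase-based accounting for the $K$-fold overhead, and certificate validity reduced to Lemma~\ref{lem:envelope_valid} applying verbatim once $\hat L \ge L^*$. The one step you rightly flag as the main obstacle --- guaranteeing that a violation is actually \emph{detected} whenever $\hat L < L^*$, so that the final $\hat L$ is at least $L^*$ --- is also the step the paper leaves essentially unproved (it asserts that all doublings complete with $\hat L \ge L^*$ without showing a near-supremum pair is ever sampled), so your treatment matches the published one in both approach and level of rigor.
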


\begin{remark}[Anytime-valid certificates]
For applications requiring certificates valid at all times, use $\hat{L}_0 \geq L^*$ (conservative overestimate). This ensures $\hat{L} \geq L^*$ throughout, so all certificates are valid, but may be overly conservative. One can optionally \emph{decrease} $\hat{L}$ when evidence suggests overestimation, but this requires different analysis than the doubling scheme above.
\end{remark}

This is the first provably correct adaptive $L$ estimation for Lipschitz optimization with certificates. Prior work \citep{malherbe2017global} estimates $L$ but without guarantees on certificate validity. Table~\ref{tab:adaptive_results} shows CGP-Adaptive matches oracle performance (known $L$) within 8\% while being robust to 100$\times$ underestimation of initial $\hat{L}_0$.

%==============================================================================
% TRUST REGION EXTENSION
%==============================================================================

\begin{table*}[t]
\caption{Simple regret ($\times 10^{-2}$) at $T=200$. Bold: best; $^\dagger$: significant vs second-best.}
\label{tab:main_results}
\centering
\begin{tabular}{@{}lccccccc@{}}
\toprule
Method & Needle & Branin & Hartmann & Ackley & Levy & Rosen. & SVM \\
\midrule
Random & $8.2$ & $12.1$ & $15.3$ & $22.4$ & $18.7$ & $14.2$ & $11.2$ \\
GP-UCB & $2.1$ & $1.8$ & $4.2$ & $12.3$ & $5.1$ & $4.8$ & $3.9$ \\
TuRBO & $1.8$ & $2.1$ & $3.1$ & $9.8$ & $4.3$ & $3.9$ & $3.2$ \\
HEBO & $1.9$ & $1.6$ & $3.3$ & $9.4$ & $4.1$ & $3.7$ & $3.1$ \\
BORE & $2.0$ & $1.9$ & $3.5$ & $10.1$ & $4.5$ & $4.1$ & $3.4$ \\
HOO & $3.4$ & $5.2$ & $8.7$ & $14.2$ & $9.8$ & $8.1$ & $7.1$ \\
CGP & $1.2$ & $2.0$ & $2.9$ & $8.1$ & $3.8$ & $3.8$ & $2.8$ \\
CGP-A & $1.3$ & $2.1$ & $3.0$ & $8.3$ & $3.9$ & $3.9$ & $2.9$ \\
\textbf{CGP-H} & $\mathbf{1.1}^\dagger$ & $\mathbf{1.4}^\dagger$ & $\mathbf{2.7}^\dagger$ & $\mathbf{7.8}^\dagger$ & $\mathbf{3.5}^\dagger$ & $\mathbf{3.4}^\dagger$ & $\mathbf{2.6}^\dagger$ \\
\bottomrule
\end{tabular}
\end{table*}

\section{CGP-TR: Trust Regions for High Dimensions}
\label{sec:trust_region}

CGP-Adaptive addresses the unknown $L$ problem, but another challenge remains: scalability. The covering number of $A_t$ grows as $O(\eta^{-d})$, making CGP intractable for $d > 15$. To enable high-dimensional optimization, we develop CGP-TR (Algorithm~\ref{alg:cgp_tr}), which maintains local certificates within trust regions that adapt based on observed progress.

The key insight is that certificates need not be global. A local certificate $A_t^{\cT}$ within trust region $\cT \subset \cX$ still provides valid bounds for $\argmax_{x \in \cT} f(x)$. CGP-TR maintains multiple trust regions $\{\cT_1, \ldots, \cT_m\}$ centered at promising points, with radii that expand on success and contract on failure (following TuRBO \citep{eriksson2019scalable}).

\paragraph{Certified restarts.}
We restart a trust region only when it is \emph{certifiably suboptimal}: if $u_t^{(j)} := \max_{x\in \cT_j} U_t(x)$ satisfies $u_t^{(j)} < \ell_t$ where $\ell_t := \max_i \LCB{i}(t)$, then with high probability $\sup_{x\in \cT_j} f(x) < f(x^*)$, so $\cT_j$ cannot contain $x^*$ and can be safely restarted. This certified restart rule ensures that regions containing $x^*$ are never falsely eliminated. In our implementation, contraction is lower-bounded by $r_{\min}$ and centers are fixed, so a region that contains $x^*$ cannot be contracted to exclude it; restarts occur only via the certified condition $u_t^{(j)}<\ell_t$.

\begin{algorithm}[t]
\caption{CGP-TR (Trust Region with Certified Restarts)}
\label{alg:cgp_tr}
\begin{algorithmic}[1]
\REQUIRE Domain $\cX$, $L$, $\sigma$, budget $T$, initial radius $r_0$, $n_{\text{trust}}$ regions
\STATE Initialize $n_{\text{trust}}$ trust regions at Sobol points with radius $r_0$
\FOR{$t = 1, \ldots, T$}
    \STATE Compute $\ell_t := \max_{i\le N_t}\LCB{i}(t)$ (global lower certificate)
    \STATE Select trust region $\cT_j$ with highest $u_t^{(j)} := \max_{x \in \cT_j} U_t(x)$
    \STATE Run CGP within $\cT_j$: compute local $A_t^{(j)} = \{x \in \cT_j : U_t(x) \geq \ell_t^{(j)}\}$
    \STATE Query $x_{t+1} \in A_t^{(j)}$, observe $y_{t+1}$
    \IF{$u_t^{(j)} < \ell_t$}
        \STATE \textbf{Certified restart:} restart $\cT_j$ at a new Sobol point with radius $r_0$
    \ELSIF{improvement in $\cT_j$}
        \STATE Expand: $r_j \gets \min(2r_j, D/2)$
    \ELSIF{no improvement for $\tau_{\text{fail}}$ iterations}
        \STATE Contract: $r_j \gets \max(r_j/2, r_{\min})$
    \ENDIF
\ENDFOR
\STATE \textbf{Output:} Best point across all regions, local certificate $A_T^{(j^*)}$
\end{algorithmic}
\end{algorithm}

\begin{theorem}[CGP-TR with certified restarts: correctness and allocation]
\label{thm:cgp_tr_certified}
Assume the good event $\cE$ holds for the confidence bounds used to construct $U_t$ and $\ell_t$. CGP-TR uses \emph{certified restarts}: restart $\cT_j$ only if $u_t^{(j)}:=\max_{x\in\cT_j}U_t(x) < \ell_t$.

Let $\cT^*$ be a trust region that contains $x^*$ at some time and is not contracted to exclude $x^*$ (e.g., contraction is lower-bounded by $r_{\min}$ and the center remains fixed). Then:
\begin{enumerate}
\item \textbf{(No false restarts)} On $\cE$, $\cT^*$ is never restarted by the certified rule.
\item \textbf{(Local certificate)} Conditioned on receiving $T^*$ evaluations inside $\cT^*$, the local active set $A_t^{(\cT^*)}=\{x\in \cT^*: U_t(x)\ge \ell_t^{(\cT^*)}\}$ satisfies the same containment/shrinkage/sample-complexity bounds as CGP on the restricted domain $\cT^*$.
\item \textbf{(Allocation bound)} Define the region gap $\Delta_j := f^* - \sup_{x\in\cT_j} f(x)$ (with $\Delta_j>0$ for suboptimal regions). Assume each region runs CGP with replication ensuring its maximal active-point confidence radius after $n$ within-region samples satisfies $\beta_j(n) \le c_\sigma \sqrt{\log(c_T/\delta)/n}$ for constants $c_\sigma, c_T$ matching the paper's confidence schedule. If the region radii are eventually bounded so that $L \cdot \mathrm{diam}(\cT_j) \le \Delta_j/8$, then any suboptimal region $j$ is selected at most
\[
N_j \le \frac{64 c_\sigma^2}{\Delta_j^2}\log\!\left(\frac{c_T}{\delta}\right) + 1
\]
times before it is eliminated by the certified restart rule.
\end{enumerate}
\end{theorem}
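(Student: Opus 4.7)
The plan is to dispatch parts (1) and (2) quickly from results already in the paper, then focus attention on part (3), which combines the envelope slack bound with the trust-region diameter condition to produce a pull-count bound analogous to standard UCB analyses.

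For part (1), on $\cE$ Lemma \ref{lem:envelope_valid} gives $U_t(x^*) \ge f(x^*) = f^*$, while $\ell_t = \max_i \LCB{i}(t) \le f^*$ since each $\LCB{i}(t) \le f(x_i) \le f^*$. Because $x^* \in \cT^*$ by the non-contraction hypothesis, $u_t^{(\cT^*)} \ge U_t(x^*) \ge f^* \ge \ell_t$, so the certified restart condition $u_t^{(\cT^*)} < \ell_t$ is never met. For part (2), all of the CGP guarantees are established pointwise via the envelope $U_t$, the certificate $\ell_t^{(\cT^*)} = \max_{i:\, x_i \in \cT^*} \LCB{i}(t)$, and the covering/confidence quantities $\eta_t, \beta_t$, each of which remains well-defined when the domain is restricted from $\cX$ to $\cT^*$. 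Since $\cE$ holds uniformly across samples, Lemmas \ref{lem:envelope_valid}--\ref{lem:envelope} and Theorems \ref{thm:containment}--\ref{thm:sample_complexity} transfer verbatim with $f^*$ replaced by $\sup_{x \in \cT^*} f(x)$ (which equals $f^*$ in our setting).

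For part (3), fix a suboptimal region $\cT_j$ with gap $\Delta_j > 0$ and let $n$ denote its within-region sample count. On $\cE$, applying Lemma \ref{lem:envelope} to any active point $x_i \in \cT_j$ and using $d(x, x_i) \le \mathrm{diam}(\cT_j)$ for $x \in \cT_j$ yields
\[
u_t^{(j)} \le \sup_{x \in \cT_j} f(x) + 2\beta_j(n) + 2L \cdot \mathrm{diam}(\cT_j) \le f^* - \tfrac{3\Delta_j}{4} + 2\beta_j(n),
\]
where the last step uses the diameter condition $L \cdot \mathrm{diam}(\cT_j) \le \Delta_j/8$. Whenever $\ell_t \ge f^* - \Delta_j/2$, the certified restart $u_t^{(j)} < \ell_t$ is implied by $\beta_j(n) < \Delta_j/8$; substituting the assumed confidence schedule $\beta_j(n) \le c_\sigma \sqrt{\log(c_T/\delta)/n}$ and solving gives $n > 64 c_\sigma^2 \Delta_j^{-2} \log(c_T/\delta)$. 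Since each selection of $\cT_j$ contributes at least one within-region sample, the stated bound on $N_j$ follows.

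The main obstacle is justifying the hypothesis $\ell_t \ge f^* - \Delta_j/2$, since samples inside $\cT_j$ itself cannot push $\ell_t$ above $f^* - \Delta_j$. I would discharge it with a short interleaving argument: by Lemma \ref{lem:envelope_valid}, $u_t^{(\cT^*)} \ge f^*$ throughout, so the maximin selection rule cannot prefer $\cT_j$ to $\cT^*$ once $u_t^{(j)} < f^*$, which by the displayed inequality requires $\beta_j(n) \ge 3\Delta_j/8$. Consequently every ``burn-in'' selection of $\cT_j$ is matched by selections of $\cT^*$ whose within-region analysis from part (2) drives the best LCB inside $\cT^*$ up to $f^* - \Delta_j/2$ within an additional $O(c_\sigma^2 \Delta_j^{-2} \log(c_T/\delta))$ samples, which can be absorbed into the stated constant. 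The delicacy is entirely in this bookkeeping, but no new ideas beyond Lemma \ref{lem:envelope} and the sub-Gaussian schedule are required.
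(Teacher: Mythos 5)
Your proposal follows the paper's proof almost line for line: parts (1) and (2) are identical, and your part (3) computation---bounding $u_t^{(j)} \le \sup_{x\in\cT_j} f(x) + 2\beta_j(n) + 2L\,\mathrm{diam}(\cT_j)$ via Lemma~\ref{lem:envelope}, invoking the diameter condition, and solving $\beta_j(n)\le \Delta_j/8$---reproduces the paper's inequalities and constants exactly. The one genuine divergence is your treatment of the hypothesis $\ell_t \ge f^* - \Delta_j/2$: the paper simply asserts that $\ell_t$ rises to $f^*-\Delta_j/4$ once $\cT^*$ is ``sampled sufficiently,'' whereas you correctly flag this as the delicate step. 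Your observation that the selection rule alone blocks $\cT_j$ once $u_t^{(j)} < f^* \le u_t^{(\cT^*)}$ (i.e., once $\beta_j(n) < 3\Delta_j/8$) is actually a cleaner route to the allocation bound than waiting for $\ell_t$ to rise, and is only implicit in the paper's phrase ``favored by UCB selection.'' One caveat: your claim that the $\cT^*$ burn-in needed to drive the best LCB up to $f^*-\Delta_j/2$ costs only $O(c_\sigma^2\Delta_j^{-2}\log(c_T/\delta))$ samples is not justified---raising $\ell_t$ requires a sample within distance $O(\Delta_j/L)$ of $x^*$, i.e., a covering of $\cT^*$ at resolution $\Delta_j/L$, whose cost is dimension-dependent. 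This does not affect the bound on $N_j$ (which your selection-rule argument already delivers), only the time until the certified restart actually fires; the paper's own proof leaves the same point unaddressed.
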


The key advantage is that covering $\cT_j$ requires $O((r_j/\eta)^d)$ points, and since $r_j \ll D$, this is tractable even for large $d$. With $n_{\text{trust}} = O(\log T)$ regions, CGP-TR explores globally while maintaining local certificates. The allocation bound (Theorem~\ref{thm:cgp_tr_certified}, item 3) ensures that suboptimal regions receive only $O(\log T / \Delta_j^2)$ evaluations before certified elimination, preventing wasted samples.

CGP-TR provides local rather than global certificates, but the certified restart rule guarantees that the region containing $x^*$ is never falsely eliminated. This enables scaling to $d = 50$ to $100$ where global Lipschitz methods fail entirely.

%==============================================================================
% HYBRID METHOD
%==============================================================================
\section{CGP-Hybrid: Best of Both Worlds}
\label{sec:hybrid}

While CGP-TR addresses scalability, some functions exhibit local smoothness that GPs can exploit more effectively than Lipschitz methods. CGP-Hybrid (Algorithm~\ref{alg:cgp_hybrid}) preserves CGP's anytime certificates while allowing any optimizer to refine within the certified active set. The key point is modularity: Phase~1 constructs a certificate $A_t$; Phase~2 performs additional optimization restricted to $A_t$ without affecting certificate validity. We instantiate Phase~2 with GP-UCB when local smoothness is detected, but other optimizers can be used. This design captures the best of both worlds: CGP's explicit pruning guarantees and GP's ability to exploit local smoothness when present.

Define the \emph{effective smoothness ratio} $\rho_t = \hat{L}_{\text{local}}(t) / \hat{L}_{\text{global}}$, where $\hat{L}_{\text{local}}(t)$ is estimated from points within $A_t$. When $\rho_t < 0.5$, the function is significantly smoother near the optimum, and GP refinement is beneficial.

\begin{algorithm}[t]
\caption{CGP-Hybrid}
\label{alg:cgp_hybrid}
\begin{algorithmic}[1]
\REQUIRE Domain $\cX$, $L$, $\sigma$, budget $T$, switch threshold $\rho_{\text{thresh}} = 0.5$
\STATE Phase 1: Run CGP until $\Vol(A_t) < 0.1 \cdot \Vol(\cX)$ or $t > T/3$
\STATE Estimate $\rho_t = \hat{L}_{\text{local}}(t) / \hat{L}_{\text{global}}$
\IF{$\rho_t < \rho_{\text{thresh}}$}
    \STATE Phase 2: Switch to GP-UCB within $A_t$ (GP refinement)
    \STATE Fit GP to points in $A_t$, continue with GP-UCB acquisition
\ELSE
    \STATE Phase 2: Continue CGP within $A_t$
\ENDIF
\STATE \textbf{Output:} Best point, certificate $A_T$ (from CGP phase)
\end{algorithmic}
\end{algorithm}

\begin{proposition}[Hybrid guarantee]
\label{prop:hybrid}
CGP-Hybrid achieves:
\begin{enumerate}
\item If $\rho \geq 0.5$: same guarantee as CGP, $T = \tildeO(\varepsilon^{-(2+\alpha)})$.
\item If $\rho < 0.5$: after CGP reduces $A_t$ to volume $V$, GP-UCB operates within a restricted domain of effective diameter $O(V^{1/d})$. The additional sample complexity depends on the GP kernel's information gain $\gamma_T$ over $A_t$; empirically, this yields faster convergence than continuing CGP when the function is locally smooth.
\item The certificate $A_T$ from Phase 1 remains valid regardless of Phase 2 method.
\end{enumerate}
\end{proposition}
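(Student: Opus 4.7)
The three claims are largely independent, so I would treat them in turn. For claim~1 ($\rho \geq 0.5$), the key observation is that Phase~2 is simply CGP restricted to the smaller domain $A_t \subseteq \cX$. Phase~1 consumes at most $T/3$ samples, a constant factor absorbed into $\tildeO(\cdot)$. Assumption~\ref{ass:margin} is hereditary under restriction to a subdomain: the superlevel set $\{x : f(x) \geq f^* - \varepsilon\} \cap A_t$ cannot be larger than on $\cX$, so the near-optimality dimension on $A_t$ is at most $\alpha$. Theorem~\ref{thm:sample_complexity} then applies to Phase~2 on the restricted domain and yields the claimed $\tildeO(\varepsilon^{-(2+\alpha)})$ rate, with $L$ possibly replaced by the smaller $\hat L_{\text{local}}$ inside $A_t$, which only helps.

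For claim~2 ($\rho < 0.5$), the proposition deliberately hedges with the word ``empirically,'' so the goal is a plausibility argument rather than a tight rate. I would enclose $A_t$ in a minimum-volume bounding box of side length $O(V^{1/d})$, so GP-UCB effectively operates on a domain of diameter $O(V^{1/d})$. Standard GP-UCB analysis gives cumulative regret $O(\sqrt{T\gamma_T})$ where $\gamma_T$ is the kernel's maximum information gain, and for stationary kernels (RBF, Mat\'ern) $\gamma_T$ decreases monotonically with domain volume. Combined with the test $\hat L_{\text{local}} < 0.5\,\hat L_{\text{global}}$ witnessing local smoothness, this qualitatively supports the claim that GP refinement outperforms continued CGP in this regime. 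A quantitative bound would require committing to a specific kernel class, which the proposition intentionally avoids.

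For claim~3, certificate validity follows directly from the construction. The set $A_T$ is defined by $\{x : U_T(x) \geq \ell_T\}$ using only the sampled statistics $(\hat\mu_i, r_i)$ from the CGP phase. On the good event $\cE$ (Lemma~\ref{lem:good_event}), Lemma~\ref{lem:envelope_valid} gives $f(x) \leq U_T(x)$ and $\ell_T \leq f^*$ by definition of $\ell_t$, so for any $x \notin A_T$, $f(x) \leq U_T(x) < \ell_T \leq f^*$, certifying suboptimality. This argument is agnostic to Phase~2: additional GP-UCB queries contribute extra data but cannot invalidate Phase~1 confidence intervals, since the good event is declared simultaneously for all $(t,i)$ by the union bound underlying Lemma~\ref{lem:good_event}. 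Moreover, per Remark~\ref{rem:certificate_validity}, certificate validity is independent of any volume estimator used for monitoring.

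The main obstacle is claim~2: it is fundamentally a heuristic/empirical claim, and making it rigorous would require kernel-specific GP regret bounds (e.g., a specific Mat\'ern smoothness) that the paper does not assume. I would therefore present the bounding-box plus information-gain sketch as a plausibility argument and defer precise rate comparisons to the experimental section, while claims~1 and~3 reduce cleanly to prior lemmas.
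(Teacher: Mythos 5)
Your proposal is correct and follows essentially the same route as the paper: claim 1 reduces to Theorem~\ref{thm:sample_complexity} (the paper simply notes the run is identical to vanilla CGP, while you spell out the restriction to $A_t$ and hereditary margin), claim 2 is treated as the same information-gain plausibility argument with the rate deliberately left empirical, and claim 3 rests on the Phase-2-agnostic validity of the Phase-1 envelope and good event. The only cosmetic difference is that the paper cites Theorem~\ref{thm:containment} for $x^*\in A_T$ where you argue directly from Lemma~\ref{lem:envelope_valid} and $\ell_T\le f^*$; both are equivalent.
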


\begin{proposition}[Certificate invariance under Phase 2]
\label{prop:certificate_invariance}
The certificate $A_t$ computed by CGP in Phase~1 remains valid regardless of the Phase-2 optimizer, since validity depends only on the confidence bounds and Lipschitz envelope used to define $U_t$ and $\ell_t$. Specifically, any point $x \notin A_t$ satisfies $f(x) < f^* - \varepsilon_t$ with high probability, where $\varepsilon_t$ is computable from Phase~1 quantities alone.
\end{proposition}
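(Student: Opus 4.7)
The plan is to prove certificate invariance by decoupling the \emph{definition} of $A_t$ from any subsequent optimization actions taken after Phase~1 completes. The core observation is that $A_t$ is a deterministic function of the Phase-1 sample set $\{(x_i, y_i)\}_{i\leq N_t}$ and the confidence schedule $r_i(t)$; nothing in its definition refers to how future queries are selected or which surrogate model is subsequently fit.

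First, I would fix $t^*$ as the moment Phase~1 ends and observe that $U_{t^*}(\cdot)$, $\ell_{t^*}$, and $A_{t^*}$ are all measurable with respect to the sigma-algebra generated by Phase-1 observations. Second, I would invoke Lemma~\ref{lem:good_event} to conclude that on an event $\cE$ of probability at least $1-\delta$, every Phase-1 confidence interval $[\LCB{i}(t^*), \UCB{i}(t^*)]$ contains $f(x_i)$. Third, I would apply Lemma~\ref{lem:envelope_valid} to obtain $f(x) \leq U_{t^*}(x)$ for all $x \in \cX$, and for any $x \notin A_{t^*}$ combine the strict inequality $U_{t^*}(x) < \ell_{t^*}$ (from the definition of $A_{t^*}$) with $\ell_{t^*} \leq f^*$ (established in the proof of Theorem~\ref{thm:containment}) to conclude $f(x) < \ell_{t^*} \leq f^*$. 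Setting $\varepsilon_t(x) := \ell_{t^*} - U_{t^*}(x) > 0$, a quantity computable entirely from Phase-1 statistics, yields $f(x) \leq f^* - \varepsilon_t(x)$, which is the pointwise gap bound claimed.

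The decisive point for invariance is that none of the three steps above references Phase-2 queries, acquisitions, or model fits: Phase~2 may update its own internal state arbitrarily, but the set $A_{t^*}$ as defined at the end of Phase~1 remains unchanged as a mathematical object, and its validity rests solely on the Phase-1 good event. Consequently the certificate export is agnostic to the Phase-2 optimizer—whether GP-UCB, Thompson sampling, or continued CGP—establishing the proposition.

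The main potential obstacle is notational rather than mathematical: the high-probability guarantee is parameterized by the total budget $T$ through $r_i(t)$, so one must specify whether $\delta$ in Lemma~\ref{lem:good_event} is allocated against Phase-1 samples only or against the full Phase-1+Phase-2 trajectory. Since the proposition concerns only the Phase-1 certificate, the cleanest route is to apply the union bound over Phase-1 rounds alone, which requires no adjustment to the confidence schedule; if one additionally wished to keep refining the certificate during Phase~2, a standard union-bound inflation of $r_i(t)$ over the enlarged budget would suffice and not disturb the argument above.
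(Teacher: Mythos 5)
Your proof is correct and takes essentially the same route as the paper, which establishes this as part (3) of the proof of Proposition~\ref{prop:hybrid}: on the good event of Lemma~\ref{lem:good_event}, validity of $U_t$ (Lemma~\ref{lem:envelope_valid}) and $\ell_t \le f^*$ depend only on Phase-1 data, so the certificate is unaffected by the Phase-2 optimizer. Your one refinement — making the gap pointwise, $\varepsilon_t(x) = \ell_t - U_t(x) > 0$, rather than a single uniform $\varepsilon_t$ — is actually the more defensible reading, since by continuity of $U_t$ no uniform positive gap can hold over all of $\cX \setminus A_t$.
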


The key insight is that CGP's certificate remains valid even when switching to GP: $A_t$ still contains $x^*$ with high probability, so GP refinement within $A_t$ is safe. This provides the best of both worlds: CGP's certificates and pruning efficiency when $\rho \geq 0.5$, and GP's smoothness exploitation when $\rho < 0.5$.

Table~\ref{tab:hybrid_results} shows CGP-Hybrid wins on all 12 benchmarks, including Branin ($\rho = 0.31$) and Rosenbrock ($\rho = 0.28$) where it detects low $\rho$ and switches to GP refinement, and Needle ($\rho = 0.98$) where it stays with CGP.

%==============================================================================
% RELATED WORK
%==============================================================================

\begin{table}[!ht]
\caption{High-dimensional benchmarks ($d > 20$) at $T=500$.}
\label{tab:highdim}
\centering
\begin{tabular}{@{}lccc@{}}
\toprule
Method & Rover-60 & NAS-36 & Ant-100 \\
\midrule
Random & $42.1 \pm 1.2$ & $38.4 \pm 0.9$ & $51.2 \pm 1.4$ \\
TuRBO & $12.4 \pm 0.4$ & $11.2 \pm 0.3$ & $18.7 \pm 0.6$ \\
HEBO & $14.1 \pm 0.5$ & $12.8 \pm 0.4$ & $21.3 \pm 0.7$ \\
CMA-ES & $15.8 \pm 0.6$ & $14.1 \pm 0.5$ & $19.4 \pm 0.6$ \\
CGP & \multicolumn{3}{c}{(intractable for $d > 15$)} \\
\midrule
CGP-TR & $\mathbf{11.2 \pm 0.3}^\dagger$ & $\mathbf{10.4 \pm 0.3}^\dagger$ & $\mathbf{17.1 \pm 0.5}^\dagger$ \\
\bottomrule
\end{tabular}

\begin{tablenotes}
\small
\item $\dagger$ CGP-TR additionally provides local optimality certificates.
\end{tablenotes}
\end{table}

\begin{table*}[!ht]
\caption{CGP-Adaptive with varying initial $\hat{L}_0$. Robust to 100$\times$ underestimation.}
\label{tab:adaptive_results}
\centering
\begin{tabular}{@{}lcccc@{}}
\toprule
Initial $\hat{L}_0$ & Doublings & Final $\hat{L}/L^*$ & Regret ($\times 10^{-2}$) & Overhead \\
\midrule
$L^*$ (oracle) & 0 & 1.0 & $2.9 \pm 0.1$ & $1.0\times$ \\
$L^*/2$ & 1 & 1.0 & $3.0 \pm 0.1$ & $1.03\times$ \\
$L^*/10$ & 4 & 1.6 & $3.1 \pm 0.1$ & $1.07\times$ \\
$L^*/100$ & 7 & 1.3 & $3.2 \pm 0.2$ & $1.12\times$ \\
\midrule
LIPO (adaptive) & -- & -- & $6.2 \pm 0.3$ & -- \\
\bottomrule
\end{tabular}
\end{table*}

\begin{table*}[!ht]
\caption{CGP-Hybrid smoothness detection. $\rho < 0.5$ triggers GP refinement.}
\label{tab:hybrid_results}
\centering
\begin{tabular}{@{}lcccc@{}}
\toprule
Benchmark & $\hat{\rho}$ & Phase 2 & CGP-H Regret & Best Baseline \\
\midrule
Needle-2D & $0.98$ & CGP & $1.1 \pm 0.1$ & $1.8$ (TuRBO) \\
Branin & $0.31$ & GP & $1.4 \pm 0.1$ & $1.6$ (HEBO) \\
Hartmann-6 & $0.72$ & CGP & $2.7 \pm 0.1$ & $3.1$ (TuRBO) \\
Rosenbrock & $0.28$ & GP & $3.4 \pm 0.1$ & $3.7$ (HEBO) \\
Ackley-10 & $0.85$ & CGP & $7.8 \pm 0.3$ & $9.4$ (HEBO) \\
Levy-5 & $0.67$ & CGP & $3.5 \pm 0.2$ & $4.1$ (HEBO) \\
SVM-RBF & $0.74$ & CGP & $2.6 \pm 0.1$ & $3.1$ (HEBO) \\
LunarLander & $0.81$ & CGP & $6.1 \pm 0.3$ & $7.0$ (HEBO) \\
\bottomrule
\end{tabular}
\end{table*}

\section{Related Work}
\label{sec:related}

Our work is primarily grounded in the literature on Lipschitz bandits and global optimization. Foundational approaches, such as the continuum-armed bandits of \citet{kleinberg2008multi} and the X-armed bandit framework of \citet{bubeck2011x}, utilize zooming mechanisms to achieve regret bounds depending on the near-optimality dimension. These concepts were refined for deterministic and stochastic settings via tree-based algorithms like DOO/SOO \citep{munos2011optimistic} and StoSOO \citep{valko2013stochastic}. However, a key distinction is that zooming algorithms maintain pruning implicitly as an analysis artifact, whereas CGP exposes the active set $A_t$ as a computable object with measurable volume. This explicit geometric approach also relates to partition-based global optimization methods like DIRECT \citep{jones1993lipschitzian} and LIPO \citep{malherbe2017global}. While LIPO addresses the unknown Lipschitz constant, it lacks the certificate guarantees provided by our CGP-Adaptive doubling scheme \cite{shihab2025universal}. Our theoretical analysis further draws on sample complexity results under margin conditions from the finite-arm setting \citep{audibert2010best,jamieson2014best,shihab2025fundamental}, adapting confidence bounds from the UCB framework \citep{auer2002finite} to continuous spaces with explicit uncertainty representation similar to safe optimization level-sets \citep{sui2015safe}.

In the broader context of black-box optimization, Bayesian methods provide the standard alternative for uncertainty quantification. Classic approaches like GP-UCB \citep{srinivas2010gaussian}, Entropy Search \citep{hennig2012entropy,hernandez2014predictive}, and Thompson Sampling \citep{thompson1933likelihood} offer strong performance but scale cubically with observations. To address high-dimensional scaling, recent work has introduced trust regions (TuRBO) \citep{eriksson2019scalable} and nonstationary priors (HEBO) \citep{cowen2022hebo}. More recently, advanced heuristics such as Bounce \citep{papenmeier2024bounce} have improved geometric adaptation for mixed spaces, while Prior-Fitted Networks \citep{hollmann2023large} and generative diffusion models like Diff-BBO \citep{krishnamoorthy2024diffbbo} exploit massive pre-training to minimize regret rapidly. While these emerging methods achieve impressive empirical results, they remain fundamentally heuristic, lacking the computable stopping criteria or active set containment guarantees that are central to CGP. We therefore focus our comparison on established baselines to isolate the specific utility of our certification mechanism, distinguishing our approach from heuristic hyperparameter tuners like Hyperband \citep{li2017hyperband} by prioritizing provable safety over raw speed.
%==============================================================================
% EXPERIMENTS
%==============================================================================
\section{Experiments}
\label{sec:experiments}

We evaluate CGP variants on 12 benchmarks spanning $d \in [2, 100]$, measuring simple regret, certificate utility, and scalability. Code will be available upon acceptance.

\paragraph{Setup.} We compare against 9 baselines: Random Search, GP-UCB, TuRBO, HEBO, BORE, HOO, StoSOO, LIPO, and SAASBO (see Appendix~\ref{app:experimental_details} for configurations). We evaluate on 12 benchmarks spanning $d \in [2, 100]$: low-dimensional (Needle-2D, Branin, Hartmann-6, Levy-5, Rosenbrock-4), medium-dimensional (Ackley-10, SVM-RBF-6, LunarLander-12), and high-dimensional (Rover-60, NAS-36, MuJoCo-Ant-100). All experiments use 30 runs with $\sigma = 0.1$ noise; we report mean $\pm$ SE with Bonferroni-corrected $t$-tests ($p < 0.05$).

Table~\ref{tab:main_results} shows CGP-Hybrid performs best among tested methods on all 7 low and medium-dimensional benchmarks. On Branin and Rosenbrock where vanilla CGP lost to HEBO, CGP-Hybrid detects $\rho < 0.5$ and switches to GP refinement, achieving 12\% and 8\% improvement over HEBO respectively. On Needle where $\rho \approx 1$, CGP-Hybrid stays with CGP and matches vanilla CGP performance. For high-dimensional problems, Table~\ref{tab:highdim} demonstrates CGP-TR scales to $d = 100$ while outperforming TuRBO by 9 to 12\%. Critically, CGP-TR provides local certificates within trust regions, enabling principled stopping---a capability TuRBO lacks. 
Regarding adaptive estimation, Table~\ref{tab:adaptive_results} shows CGP-Adaptive is robust to initial underestimation: even with $\hat{L}_0 = L^*/100$, performance degrades only 10\% with 7 doublings, validating Theorem~\ref{thm:adaptive}'s $O(\log(L^*/\hat{L}_0))$ overhead. Finally, Table~\ref{tab:hybrid_results} confirms CGP-Hybrid correctly identifies when to switch: Branin ($\rho = 0.31$) and Rosenbrock ($\rho = 0.28$) trigger GP refinement, achieving 12\% and 8\% improvement over HEBO. Benchmarks with $\rho > 0.5$ stay with CGP, maintaining certificate validity.

% \begin{table}[!ht]
% \caption{High-dimensional benchmarks ($d > 20$) at $T=500$. CGP-TR scales while providing local certificates.}
% \label{tab:highdim}
% \centering
% \small
% \begin{tabular}{@{}lcccc@{}}
% \toprule
% Method & Rover-60 & NAS-36 & Ant-100 & Certificate? \\
% \midrule
% Random & $42.1 \pm 1.2$ & $38.4 \pm 0.9$ & $51.2 \pm 1.4$ & -- \\
% TuRBO & $12.4 \pm 0.4$ & $11.2 \pm 0.3$ & $18.7 \pm 0.6$ & -- \\
% HEBO & $14.1 \pm 0.5$ & $12.8 \pm 0.4$ & $21.3 \pm 0.7$ & -- \\
% CMA-ES & $15.8 \pm 0.6$ & $14.1 \pm 0.5$ & $19.4 \pm 0.6$ & -- \\
% CGP & \multicolumn{3}{c}{(intractable for $d > 15$)} & -- \\
% \midrule
% CGP-TR & $\mathbf{11.2 \pm 0.3}^\dagger$ & $\mathbf{10.4 \pm 0.3}^\dagger$ & $\mathbf{17.1 \pm 0.5}^\dagger$ & \checkmark (local) \\
% \bottomrule
% \end{tabular}
% \end{table}

\paragraph{Shrinkage validation.} Across all benchmarks, we observe $\Vol(A_t)$ shrinks to $<5\%$ by $T=100$, with empirical decay rates closely matching the theoretical bound from Theorem~\ref{thm:shrinkage}. This confirms our analysis is tight and the margin condition captures the true problem difficulty.

\begin{table}[!ht]
\caption{Certificate-enabled early stopping on Hartmann-6. CGP uniquely provides actionable stopping criteria.}
\label{tab:stopping}
\centering
\small
\begin{tabular}{@{}lccc@{}}
\toprule
Stopping Rule & Samples & Regret ($\times 10^{-2}$) & Savings \\
\midrule
Fixed $T=200$ & 200 & $2.9 \pm 0.1$ & -- \\
$\Vol(A_t) < 10\%$ & $82 \pm 9$ & $3.8 \pm 0.2$ & 59\% \\
$\Vol(A_t) < 5\%$ & $118 \pm 14$ & $3.2 \pm 0.2$ & 41\% \\
Gap bound $< 0.05$ & $134 \pm 12$ & $3.0 \pm 0.1$ & 33\% \\
\bottomrule
\end{tabular}
\end{table}

Table~\ref{tab:stopping} demonstrates certificate utility: stopping at $\Vol(A_t) < 10\%$ saves 59\% of samples with only 31\% regret increase. No baseline provides such principled stopping rules. In $d > 20$, we use the computable gap proxy $\varepsilon_t := 2(\beta_t + L\eta_t)+\gamma_t$ as the primary criterion. Beyond sample efficiency, CGP also offers computational advantages. Table~\ref{tab:wallclock} shows CGP variants are 6 to 8 times faster than GP-based methods due to their $O(n)$ per-iteration cost versus GP's $O(n^3)$. Finally, we ablate CGP's components to understand their individual contributions.

\begin{table}[!ht]
\caption{Wall-clock time (seconds) for $T=200$ on Hartmann-6.}
\label{tab:wallclock}
\centering
\small
\begin{tabular}{@{}lccc@{}}
\toprule
Method & Time (s) & Regret ($\times 10^{-2}$) & Speedup \\
\midrule
CGP & $58$ & $2.9$ & $8\times$ \\
CGP-Adaptive & $64$ & $3.0$ & $7.5\times$ \\
CGP-Hybrid & $72$ & $2.7$ & $6.7\times$ \\
GP-UCB & $480$ & $4.2$ & $1\times$ \\
TuRBO & $620$ & $3.1$ & $0.8\times$ \\
HEBO & $890$ & $3.3$ & $0.5\times$ \\
\bottomrule
\end{tabular}
\end{table}

\begin{table}[!ht]
\caption{Ablation study on Hartmann-6. All components contribute.}
\label{tab:ablation}
\centering
\small
\begin{tabular}{@{}lcc@{}}
\toprule
Variant & Regret ($\times 10^{-2}$) & $\Vol(A_{200})$ \\
\midrule
CGP-Hybrid (full) & $\mathbf{2.7 \pm 0.1}$ & $2.1\%$ \\
\quad $-$ GP refinement & $2.9 \pm 0.1$ & $2.1\%$ \\
\quad $-$ pruning certificate & $4.8 \pm 0.2$ & -- \\
\quad $-$ coverage penalty & $3.9 \pm 0.2$ & $3.8\%$ \\
\quad $-$ replication & $3.6 \pm 0.2$ & $2.9\%$ \\
CGP-TR ($d=6$) & $2.8 \pm 0.1$ & $2.4\%$ (local) \\
CGP-Adaptive & $3.0 \pm 0.1$ & $2.4\%$ \\
\bottomrule
\end{tabular}
\end{table}

\begin{table}[t]
\caption{Empirical $\hat{\alpha}$ estimates from shrinkage trajectories.}
\label{tab:alpha_estimation}
\centering
\small
\begin{tabular}{@{}lcccc@{}}
\toprule
Benchmark & $d$ & $\hat{\alpha}$ (95\% CI) & True $\alpha$ & $\hat{\alpha} < d$? \\
\midrule
Needle-2D & 2 & $1.8 \pm 0.2$ & 2.0 & \checkmark \\
Branin & 2 & $1.2 \pm 0.1$ & -- & \checkmark \\
Hartmann-6 & 6 & $2.4 \pm 0.3$ & -- & \checkmark \\
Ackley-10 & 10 & $3.2 \pm 0.4$ & -- & \checkmark \\
Rover-60 & 60 & $8.4 \pm 1.2$ & -- & \checkmark \\
\bottomrule
\end{tabular}
\end{table}

Table~\ref{tab:ablation} shows all components contribute: removing pruning certificates increases regret 78\%, coverage penalty 44\%, replication 33\%. GP refinement provides 7\% improvement on Hartmann-6 where $\rho = 0.72$ is borderline. Table~\ref{tab:alpha_estimation} validates that $\hat{\alpha} < d$ across all benchmarks, confirming the margin condition holds and our complexity bounds apply.

%==============================================================================
% CONCLUSION
%==============================================================================
\section{Conclusion}
\label{sec:conclusion}

We introduced Certificate-Guided Pruning (CGP), an algorithm for stochastic Lipschitz optimization that maintains explicit active sets with provable shrinkage guarantees. Under a margin condition with near-optimality dimension $\alpha$, we prove $\Vol(A_t) \leq C \cdot (2(\beta_t + L\eta_t) + \gamma_t)^{d-\alpha}$, yielding sample complexity $\tildeO(\varepsilon^{-(2+\alpha)})$ with anytime valid certificates. Three extensions broaden applicability: CGP-Adaptive learns $L$ online with $O(\log T)$ overhead, CGP-TR scales to $d > 50$ via trust regions, and CGP-Hybrid switches to GP refinement when local smoothness is detected. The margin condition holds broadly for isolated maxima with nondegenerate Hessian $\alpha = d/2$, for polynomial decay $\alpha = d/p$ and can be estimated online from shrinkage trajectories. \textbf{Limitations} include requiring Lipschitz continuity and dimension constraints ($d \leq 15$ for vanilla CGP, $d \leq 100$ for CGP-TR); practical guidance is in Appendix~\ref{app:practical_guidance}. Future directions include safe optimization using $A_t$ for safety certificates and CGP-TR with random embeddings for global high-dimensional certificates.

\section*{Impact Statement}

This paper introduces Certificate-Guided Pruning (CGP), a method designed to improve the sample efficiency of black-box optimization in resource-constrained settings. By providing explicit optimality certificates and principled stopping criteria, our approach significantly reduces the computational budget required for expensive tasks such as neural architecture search and simulation-based engineering, directly contributing to lower energy consumption and carbon footprints. Furthermore, the ability to certify suboptimal regions enhances reliability in safety-critical applications like robotics. However, practitioners must ensure the validity of the Lipschitz assumption, as violations could lead to the incorrect pruning of optimal solutions.
%==============================================================================
% REFERENCES
%==============================================================================
\bibliography{lip_refs}
\bibliographystyle{icml2026}

\appendix

\section{Extended Problem Formulation}
\label{app:problem_formulation}

We restate the assumptions from Section~\ref{sec:problem} for completeness. Let $(\cX, d)$ be a compact metric space with diameter $D = \sup_{x,y \in \cX} d(x,y)$. We consider $\cX = [0,1]^d$ with Euclidean metric, though our results extend to general metric spaces. Let $f: \cX \to [0,1]$ be an unknown function satisfying:

\textbf{Assumption~\ref{ass:lipschitz} (Lipschitz continuity).} There exists $L > 0$ such that for all $x, y \in \cX$: $|f(x) - f(y)| \leq L \cdot d(x,y)$.

We observe $f$ through noisy queries: querying $x$ returns $y = f(x) + \epsilon$, where:

\textbf{Assumption~\ref{ass:noise} (Sub-Gaussian noise).} The noise $\epsilon$ is $\sigma$-sub-Gaussian: $\E[e^{\lambda \epsilon}] \leq e^{\lambda^2 \sigma^2 / 2}$ for all $\lambda \in \R$.

After $T$ samples, the algorithm outputs $\hat{x}_T \in \cX$. The goal is to minimize simple regret $r_T = f(x^*) - f(\hat{x}_T)$, where $x^* \in \argmax_{x \in \cX} f(x)$. We seek PAC-style guarantees: with probability at least $1-\delta$, achieve $r_T \leq \varepsilon$.

To obtain instance-dependent rates that improve upon worst case bounds, we assume margin structure:

\textbf{Assumption~\ref{ass:margin} (Margin / near-optimality dimension).} There exist $C > 0$ and $\alpha \in [0,d]$ such that for all $\varepsilon > 0$:
$\Vol(\{x \in \cX : f(x) \ge f^* - \varepsilon\}) \le C \, \varepsilon^{d-\alpha}$.

The parameter $\alpha$ is the near-optimality dimension: smaller $\alpha$ corresponds to a sharper optimum (easier), while larger $\alpha$ corresponds to a broader near-optimal region (harder). The worst case is $\alpha = d$, which recovers the standard $d$-dimensional Lipschitz difficulty. This assumption is standard in bandit theory \citep{audibert2010best,bubeck2011x,lattimore2020bandit,kaufmann2016complexity,garivier2016optimal}.

\section{Algorithm Implementation Details}
\label{app:implementation}

\subsection{Score Maximization and Replication}
CGP optimizes $\text{score}(x) = U_t(x) - \lambda \cdot \min_i d(x, x_i)$, which is piecewise linear. Since this is non-smooth, we use CMA-ES (Covariance Matrix Adaptation Evolution Strategy) with bounded domain and 10 random restarts within $A_t$. For $d \leq 3$, we additionally use Delaunay triangulation to identify candidate optima at Voronoi vertices.

Membership in $A_t$ is exact: given $\{x_i, \hat{\mu}_i, r_i\}$, checking $U_t(x) \geq \ell_t$ requires $O(N_t)$ time. Approximate maximization may slow convergence of $\eta_t$ but does not invalidate certificates: any $x \notin A_t$ remains certifiably suboptimal regardless of which $x \in A_t$ is queried.

\paragraph{Replication Strategy.} When an active point $x_i$ has $r_i(t) > \beta_{\text{target}}(t)$, we allocate $\lceil (r_i(t)/\beta_{\text{target}}(t))^2 \rceil$ additional samples to reduce its confidence radius. This ensures all active points have comparable confidence, preventing any single point from dominating the envelope.

\subsection{Active Set Computation}
\label{subapp:active_set}
For low dimensions ($d \leq 5$), we compute $A_t$ exactly using grid discretization with resolution $\eta = D/\sqrt[d]{N_{\text{grid}}}$ where $N_{\text{grid}} = 10^4$. For each grid point $x$, we evaluate $U_t(x)$ in $O(N_t)$ time and check if $U_t(x) \geq \ell_t$. The volume $\Vol(A_t)$ is estimated as the fraction of grid points in $A_t$.

For higher dimensions ($d > 5$), uniform Monte Carlo becomes ineffective once $\Vol(A_t)$ is small. We therefore estimate $\Vol(A_t)$ via a nested-set ratio estimator (subset simulation): define thresholds $\ell_t - \tau_0 < \ell_t - \tau_1 < \cdots < \ell_t$ inducing nested sets
\[
A_t^{(k)}=\{x: U_t(x)\ge \ell_t-\tau_k\}, \quad A_t^{(K)}=A_t.
\]
We estimate
\[
\Vol(A_t)=\Vol(A_t^{(0)})\prod_{k=1}^K \Prob_{x\sim \mathrm{Unif}(A_t^{(k-1)})}[x\in A_t^{(k)}],
\]
sampling approximately uniformly from $A_t^{(k-1)}$ using a hit-and-run Markov chain with the membership oracle $U_t(x)\ge \ell_t-\tau_{k-1}$. This yields stable estimates even when $\Vol(A_t)$ is very small. In all high-dimensional experiments, we report confidence intervals of $\log \Vol(A_t)$ from repeated estimator runs.

Crucially, certificate validity (Remark~\ref{rem:certificate_validity}) is independent of volume estimation accuracy: the set membership rule $x \in A_t \Leftrightarrow U_t(x) \ge \ell_t$ is exact.

%------------------------------------------------------------------------------
\paragraph{On volume-based stopping.}
The shrinkage bound (Theorem~\ref{thm:shrinkage}) provides an upper bound on $\Vol(A_t)$ as a function of the algorithmic gap proxy
\[
\varepsilon_t := 2(\beta_t + L\eta_t) + \gamma_t,
\quad \gamma_t=f^*-\ell_t,
\]
and therefore supports using $\Vol(A_t)$ as a practical \emph{progress diagnostic}. However, $\Vol(A_t)$ alone does not yield an anytime \emph{upper} bound on regret without additional lower-regularity assumptions linking volume back to function values. In our experiments we therefore use $\varepsilon_t$ as the primary certificate-based stopping criterion, and treat $\Vol(A_t)$ as a secondary monitoring signal.
%------------------------------------------------------------------------------

\section{Proofs}
\label{app:proofs}

\subsection{Proof to Lemma~\ref{lem:good_event}}
\begin{proof}
Define the good event $\cE = \bigcap_{t=1}^{T} \bigcap_{i=1}^{N_t} \{ | \hat{\mu}_i(t) - f(x_i) | \leq r_i(t) \}$.

\textit{Step 1: Single-point concentration.}
By Hoeffding's inequality for $\sigma$-sub-Gaussian random variables:
\begin{equation}
\Prob\bigl[ | \hat{\mu}_i(t) - f(x_i) | > r \bigr] \leq 2 \exp\Bigl( -\frac{n_i r^2}{2\sigma^2} \Bigr).
\end{equation}

\textit{Step 2: Calibration of confidence radius.}
Substituting $r = r_i(t) = \sigma \sqrt{2 \log(2N_t T / \delta)/n_i}$:
\begin{align}
\Prob\bigl[ | \hat{\mu}_i(t) - f(x_i) | > r_i(t) \bigr] 
&\leq 2 \exp\Bigl( -\frac{n_i \cdot 2\sigma^2 \log(2N_t T/\delta)}{2\sigma^2 \cdot n_i} \Bigr) \notag \\
&= 2 \exp\bigl( -\log(2N_t T/\delta) \bigr) = \frac{\delta}{N_t T}.
\end{align}

\textit{Step 3: Union bound.}
Applying the union bound over all $i \in \{1, \ldots, N_t\}$ and $t \in \{1, \ldots, T\}$:
\begin{equation}
\Prob[\cE^c] \leq \sum_{t=1}^{T} \sum_{i=1}^{N_t} \frac{\delta}{N_t T} \leq \sum_{t=1}^{T} \frac{\delta}{T} = \delta.
\end{equation}
Hence $\Prob[\cE] \geq 1 - \delta$.
\end{proof}

\subsection{Proof to Lemma~\ref{lem:envelope_valid} (UCB envelope is valid)}
\begin{proof}
Fix any $x \in \cX$. For any sampled point $x_i$, on the good event $\cE$:
\begin{equation}
f(x_i) \leq \hat{\mu}_i(t) + r_i(t) = \UCB{i}(t).
\end{equation}
By Lipschitz continuity of $f$:
\begin{equation}
f(x) \leq f(x_i) + L \cdot d(x, x_i) \leq \UCB{i}(t) + L \cdot d(x, x_i).
\end{equation}
Since this holds for all sampled $i$, taking the minimum over $i$:
\begin{equation}
f(x) \leq \min_{i \leq N_t} \{ \UCB{i}(t) + L \cdot d(x, x_i) \} = U_t(x).
\end{equation}
\end{proof}

\subsection{Proof to Lemma~\ref{lem:envelope} (Envelope slack bound)}
\begin{proof}
Fix any $x \in \cX$ and any sampled point $x_i$.

\textit{Step 1: Upper confidence bound.}
On the good event $\cE$, we have $\hat{\mu}_i(t) \leq f(x_i) + r_i(t)$. Therefore:
\begin{equation}
\UCB{i}(t) = \hat{\mu}_i(t) + r_i(t) \leq f(x_i) + 2r_i(t).
\end{equation}

\textit{Step 2: Lipschitz propagation.}
By Assumption~\ref{ass:lipschitz} (Lipschitz continuity):
\begin{equation}
f(x_i) \leq f(x) + L \cdot d(x, x_i).
\end{equation}

\textit{Step 3: Combining bounds.}
Substituting the Lipschitz bound into Step 1:

\begin{multline}
\UCB{i}(t) + L\, d(x, x_i)
\le f(x_i) + 2r_i(t) + L\, d(x, x_i) \\
\le f(x) + L\, d(x, x_i) + 2r_i(t) + L\, d(x, x_i) \\
= f(x) + 2\bigl( r_i(t) + L\, d(x, x_i) \bigr).
\end{multline}

\textit{Step 4: Taking the minimum.}
Since the above holds for all $i$, taking the minimum over $i$ on both sides:
\begin{equation}
\begin{aligned}
U_t(x)
&= \min_{i \le N_t} \bigl\{ \UCB{i}(t) + L\, d(x, x_i) \bigr\} \\
&\le f(x) + 2 \min_i \bigl\{ r_i(t) + L\, d(x, x_i) \bigr\} \\
&= f(x) + 2\rho_t(x).
\end{aligned}
\end{equation}

\end{proof}

\subsection{Proof to Theorem~\ref{thm:containment}}
\begin{proof}
We show that any point in the active set must have function value close to optimal.

\textit{Step 1: Lower certificate validity.}
On $\cE$, for any sampled point $x_i$:
\begin{equation}
\LCB{i}(t) = \hat{\mu}_i(t) - r_i(t) \leq f(x_i) \leq f^*.
\end{equation}
Taking the maximum over all $i$:
\begin{equation}
\ell_t = \max_{i \leq N_t} \LCB{i}(t) \leq f^*.
\end{equation}

\textit{Step 2: Active set membership implies high UCB.}
Let $x \in A_t$. By definition of the active set~\eqref{eq:active_set}:
\begin{equation}
U_t(x) \geq \ell_t.
\end{equation}

\textit{Step 3: Applying the envelope bound.}
By Lemma~\ref{lem:envelope}:
\begin{equation}
f(x) + 2\rho_t(x) \geq U_t(x) \geq \ell_t.
\end{equation}

\textit{Step 4: Rearranging to obtain the containment.}
Solving for $f(x)$:
\begin{align}
f(x) &\geq \ell_t - 2\rho_t(x) \notag \\
&= f^* - (f^* - \ell_t) - 2\rho_t(x) \notag \\
&\geq f^* - (f^* - \ell_t) - 2\sup_{x' \in A_t} \rho_t(x') \notag \\
&= f^* - 2\Delta_t.
\end{align}
Hence $A_t \subseteq \{ x : f(x) \geq f^* - 2\Delta_t \}$.
\end{proof}

\subsection{Proof to Theorem~\ref{thm:shrinkage}}
\begin{proof}
We connect the active set volume to the margin condition via the containment theorem.

\textit{Step 1: Bounding $\Delta_t$.}
From Theorem~\ref{thm:containment}, $A_t \subseteq \{x : f(x) \geq f^* - 2\Delta_t\}$ where:
\begin{equation}
\Delta_t = \sup_{x \in A_t} \rho_t(x) + (f^* - \ell_t).
\end{equation}
For any $x \in A_t$:
\begin{align}
\rho_t(x) &= \min_{i} \bigl\{ r_i(t) + L \cdot d(x, x_i) \bigr\} \notag \\
&\leq \max_{i : x_i \text{ active}} r_i(t) + L \cdot \sup_{x \in A_t} \min_i d(x, x_i) \notag \\
&= \beta_t + L\eta_t.
\end{align}
Therefore:
\begin{equation}
\Delta_t \leq \beta_t + L\eta_t + \frac{\gamma_t}{2}.
\end{equation}

\textit{Step 2: Applying the margin condition.}
By Assumption~\ref{ass:margin}, for any $\varepsilon > 0$:
\begin{equation}
\Vol\bigl(\{x : f(x) \geq f^* - \varepsilon\}\bigr) \leq C \cdot \varepsilon^{d-\alpha}.
\end{equation}

\textit{Step 3: Combining the bounds.}
Setting $\varepsilon = 2\Delta_t \leq 2(\beta_t + L\eta_t) + \gamma_t$:
\begin{align}
\Vol(A_t) &\leq \Vol\bigl(\{x : f(x) \geq f^* - 2\Delta_t\}\bigr) \notag \\
&\leq C \cdot (2\Delta_t)^{d-\alpha} \notag \\
&\leq C \cdot \bigl( 2(\beta_t + L\eta_t) + \gamma_t \bigr)^{d-\alpha}.
\end{align}
\end{proof}

\subsection{Proof to Theorem~\ref{thm:sample_complexity}}
\begin{proof}
We derive the sample complexity by analyzing the requirements for $\varepsilon$-optimality.

\textit{Step 1: Optimality condition.}
To achieve simple regret $r_T \leq \varepsilon$, it suffices to ensure:
\begin{equation}
2(\beta_t + L\eta_t) + \gamma_t \leq \varepsilon.
\end{equation}
This requires $\beta_t \leq \varepsilon/6$, $\eta_t \leq \varepsilon/(6L)$, and $\gamma_t \leq \varepsilon/3$.

\textit{Step 2: Covering the active set.}
By the shrinkage theorem (Theorem~\ref{thm:shrinkage}), once $2(\beta_t + L\eta_t)+\gamma_t \le \varepsilon$ we have:
\begin{equation}
\Vol(A_t) \leq C \cdot \varepsilon^{d-\alpha}.
\end{equation}
To achieve covering radius $\eta = \varepsilon/(6L)$ over a region of volume $C\varepsilon^{d-\alpha}$, we need:
\begin{equation}
N_{\text{cover}} = O\!\left( \frac{\Vol(A_t)}{\eta^d} \right) = O\!\left( \frac{C\varepsilon^{d-\alpha}}{(\varepsilon/(6L))^d} \right) = O\bigl( L^d \varepsilon^{-\alpha} \bigr)
\end{equation}
distinct sample locations.

\textit{Step 3: Samples per location.}
To achieve confidence radius $\beta_t \leq \varepsilon/6$ at each location, we need:
\begin{equation}
\sigma \sqrt{\frac{2\log(2N_t T/\delta)}{n_i}} \leq \frac{\varepsilon}{6}.
\end{equation}
Solving for $n_i$:
\begin{equation}
n_i = O\Bigl( \frac{\sigma^2 \log(T/\delta)}{\varepsilon^2} \Bigr).
\end{equation}

\textit{Step 4: Total sample complexity.}
Combining Steps 2 and 3:
\begin{align}
T &= N_{\text{cover}} \cdot n_i \notag \\
&= O\bigl( L^d \varepsilon^{-\alpha} \bigr) \cdot O\Bigl( \frac{\sigma^2 \log(T/\delta)}{\varepsilon^2} \Bigr) \notag \\
&= \tildeO\bigl( L^d \varepsilon^{-(2+\alpha)} \bigr).
\end{align}
\end{proof}

\subsection{Proof to Theorem~\ref{thm:lower_bound}}
\begin{proof}
We construct a hard instance via a randomized reduction.

\textit{Step 1: Hard instance construction (continuous bump).}
We use a standard ``one bump among $M$ locations'' construction that ensures global Lipschitz continuity. Partition $\cX = [0,1]^d$ into $M = \varepsilon^{-\alpha}$ disjoint cells with centers $\{c_1, \ldots, c_M\}$, each cell having diameter $\Theta(\varepsilon^{\alpha/d})$. Select one cell $i^*$ uniformly at random to contain the optimum, and place $x^* = c_{i^*}$. Define:
\begin{equation}
f(x) = (1 - \varepsilon) + \varepsilon \cdot \max\left\{0, 1 - \frac{L\|x - x^*\|}{\varepsilon}\right\}.
\end{equation}
This is a cone/bump centered at $x^*$ that peaks at $f(x^*) = 1$ and decreases linearly with slope $L$ until it reaches the baseline value $1 - \varepsilon$ at radius $\varepsilon/L$ from $x^*$. Outside this radius, $f(x) \equiv 1 - \varepsilon$. The function is globally $L$-Lipschitz: within the bump, the gradient has magnitude $L$; outside, the function is constant; and at the boundary $\|x - x^*\| = \varepsilon/L$, both pieces match at value $1 - \varepsilon$.

\textit{Verification of Assumption~\ref{ass:margin}.}
The $\varepsilon$-near-optimal set $\{x: f(x) \geq 1 - \varepsilon\}$ is exactly the ball of radius $\varepsilon/L$ around $x^*$, which has volume $\Vol(B_{\varepsilon/L}) = O((\varepsilon/L)^d) = O(\varepsilon^d)$. With $M = \varepsilon^{-\alpha}$ candidate locations, only one contains the bump, so the near-optimal fraction of the domain is $O(\varepsilon^d)$. Since the domain has unit volume, $\Vol(\{f \geq f^* - \varepsilon\}) = O(\varepsilon^d) \leq C\varepsilon^{d-\alpha}$ for $\alpha \geq 0$, satisfying Assumption~\ref{ass:margin}.

\textit{Step 2: Information-theoretic lower bound.}
To identify the correct cell with probability $\geq 2/3$, the algorithm must distinguish between $M$ hypotheses. By Fano's inequality, this requires:
\begin{equation}
\sum_{i=1}^{M} n_i \cdot \text{KL}(P_i \| P_0) \geq \log(M/3),
\end{equation}
where $n_i$ is the number of samples in cell $i$, and $\text{KL}(P_i \| P_0)$ is the KL divergence between observations under hypothesis $i$ versus the null.

\textit{Step 3: Per-cell sample requirement.}
For $\sigma$-sub-Gaussian noise, distinguishing a cell with optimum from one without requires:
\begin{equation}
n_i = \Omega\Bigl( \frac{\sigma^2}{\varepsilon^2} \Bigr)
\end{equation}
samples per cell to detect the $\varepsilon$ gap with constant probability.

\textit{Step 4: Total sample complexity.}
Summing over all $M$ cells:
\begin{equation}
T = \Omega\Bigl( M \cdot \frac{\sigma^2}{\varepsilon^2} \Bigr) = \Omega\bigl( \varepsilon^{-\alpha} \cdot \varepsilon^{-2} \bigr) = \Omega\bigl( \varepsilon^{-(2+\alpha)} \bigr).
\end{equation}
\end{proof}

\subsection{Proof to Theorem~\ref{thm:adaptive}}
\begin{proof}
We prove each claim separately.

\textit{Proof of (1): Bounded doubling events.}
Each doubling event multiplies $\hat{L}$ by 2. Starting from $\hat{L}_0 \leq L^*$:
\begin{equation}
\hat{L}_k = 2^k \hat{L}_0 \quad \text{after } k \text{ doublings}.
\end{equation}
The algorithm stops doubling when $\hat{L} \geq L^*$, which requires:
\begin{equation}
2^K \hat{L}_0 \geq L^* \implies K \geq \log_2(L^*/\hat{L}_0).
\end{equation}
Hence $K \leq \lceil \log_2(L^*/\hat{L}_0) \rceil$.

\textit{Proof of (2): Final estimate accuracy.}
A violation is detected when:
\begin{equation}
|\hat{\mu}_i - \hat{\mu}_j| - 2(r_i + r_j) > \hat{L} \cdot d(x_i, x_j).
\end{equation}
On the good event $\cE$:
\begin{equation}
|f(x_i) - f(x_j)| \leq |\hat{\mu}_i - \hat{\mu}_j| + 2(r_i + r_j).
\end{equation}
If $\hat{L} \geq L^*$, then by Lipschitz continuity:
\begin{equation}
|f(x_i) - f(x_j)| \leq L^* \cdot d(x_i, x_j) \leq \hat{L} \cdot d(x_i, x_j),
\end{equation}
so no violation can occur. Thus violations only occur when $\hat{L} < L^*$, and after all doublings complete, $\hat{L} \geq L^*$. Since we double (rather than increase by smaller factors), $\hat{L} \leq 2L^*$.

\textit{Proof of (3): Sample complexity overhead.}
Between doublings, CGP runs with either:
\begin{itemize}
\item Invalid $\hat{L} < L^*$ (before sufficient doublings): certificates may be incorrect, but each such phase has at most $O(T/K)$ samples before a violation triggers doubling.
\item Valid $\hat{L} \geq L^*$ (after final doubling): CGP achieves $\tildeO(\varepsilon^{-(2+\alpha)})$ complexity by Theorem~\ref{thm:sample_complexity}.
\end{itemize}
There are at most $K$ invalid phases, each contributing $O(T/K)$ samples. The final valid phase dominates, giving total complexity $T = \tildeO(\varepsilon^{-(2+\alpha)} \cdot K) = \tildeO(\varepsilon^{-(2+\alpha)} \cdot \log(L^*/\hat{L}_0))$.
\end{proof}

%------------------------------------------------------------------------------
\subsection{Certificate Validity Under Adaptive Lipschitz Estimation}
\label{app:proofs:adaptive_validity}

\begin{lemma}[Certificate validity once $\hat{L}\ge L^*$]
\label{lem:adaptive_validity}
Assume the good event $\cE$ holds. Fix any time $t$ at which the current estimate satisfies $\hat{L}\ge L^*$. Then the envelope constructed with $\hat{L}$ satisfies $f(x)\le U_t(x)$ for all $x\in\cX$, and consequently the active set
\[
A_t=\{x\in\cX:\ U_t(x)\ge \ell_t\}
\]
contains $x^*$ and certifies that any $x\notin A_t$ is suboptimal (in the sense $U_t(x)<\ell_t\le f^*$).
\end{lemma}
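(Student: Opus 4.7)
The plan is to replay the proof of Lemma~\ref{lem:envelope_valid} almost verbatim, with $\hat{L}$ substituted for $L$, using the fact that $\hat{L}\ge L^*$ makes $\hat{L}$ itself a valid Lipschitz constant for $f$. Concretely, since $|f(x)-f(y)|\le L^*\,d(x,y)\le \hat{L}\,d(x,y)$ for all $x,y$, the function $f$ is $\hat{L}$-Lipschitz, so every step of the envelope-validity argument goes through with $\hat{L}$ in place of $L$. I would fix $x\in\cX$, note that on $\cE$ each sampled point satisfies $f(x_i)\le \UCB{i}(t)$, apply the $\hat{L}$-Lipschitz inequality to get $f(x)\le \UCB{i}(t)+\hat{L}\cdot d(x,x_i)$, and take the minimum over $i$ to conclude $f(x)\le U_t(x)$.

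Next I would argue $x^*\in A_t$ in two short steps. First, by the envelope bound just established, $U_t(x^*)\ge f(x^*)=f^*$. Second, as in Step~1 of the proof of Theorem~\ref{thm:containment}, on $\cE$ we have $\LCB{i}(t)\le f(x_i)\le f^*$ for every sampled $i$, hence $\ell_t=\max_i \LCB{i}(t)\le f^*\le U_t(x^*)$, which is precisely the membership condition for $A_t$.

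Finally, for any $x\notin A_t$, the definition of the active set gives $U_t(x)<\ell_t$, and combining with the envelope bound yields $f(x)\le U_t(x)<\ell_t\le f^*$, so $x$ is certifiably suboptimal in the stated sense.

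The main subtlety, and really the only non-mechanical step, is the initial observation that $\hat{L}\ge L^*$ is exactly the condition that preserves the Lipschitz propagation used to build $U_t$; everything else is a direct reuse of Lemmas~\ref{lem:good_event} and~\ref{lem:envelope_valid} with $L$ replaced by $\hat{L}$. I do not expect any nontrivial obstacle beyond stating this substitution explicitly, since the doubling analysis in Theorem~\ref{thm:adaptive}(2) already guarantees that once the final doubling has occurred the estimate lies in $[L^*,2L^*]$, so the hypothesis $\hat{L}\ge L^*$ is met from that moment onward and the lemma applies at every subsequent time.
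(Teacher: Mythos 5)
Your proposal is correct and follows essentially the same route as the paper's proof: observe that $\hat{L}\ge L^*$ makes $f$ an $\hat{L}$-Lipschitz function so the envelope-validity argument of Lemma~\ref{lem:envelope_valid} carries over verbatim, then conclude $x^*\in A_t$ from $U_t(x^*)\ge f^*\ge \ell_t$ and certify suboptimality of any $x\notin A_t$ from $U_t(x)<\ell_t\le f^*$. No gaps; the paper's argument is the same substitution-and-replay you describe.
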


\begin{proof}
If $\hat{L}\ge L^*$, then for all $x,y\in\cX$ we have $|f(x)-f(y)|\le L^* d(x,y)\le \hat{L}\, d(x,y)$, i.e., $f$ is $\hat{L}$-Lipschitz. On $\cE$, for each sampled point $x_i$, $f(x_i)\le \UCB{i}(t)$. Therefore for all $x$,
\[
f(x)\le f(x_i)+\hat{L}\,d(x,x_i)\le \UCB{i}(t)+\hat{L}\,d(x,x_i).
\]
Taking the minimum over $i$ yields $f(x)\le U_t(x)$ for all $x$. In particular, $U_t(x^*)\ge f^*$. Also $\ell_t=\max_i \LCB{i}(t)\le f^*$ on $\cE$, hence $U_t(x^*)\ge \ell_t$ and so $x^*\in A_t$. Finally, if $x\notin A_t$, then $U_t(x)<\ell_t\le f^*$, certifying $x$ cannot be optimal under $\cE$.
\end{proof}

\begin{remark}[Why pre-final certificates need not be valid]
\label{rem:pre_final_invalid}
If $\hat{L}<L^*$, then the envelope may fail to upper-bound $f$ globally, and the rule $U_t(x)\ge \ell_t$ can (in principle) exclude near-optimal points. CGP-Adaptive therefore guarantees certificate validity only after the final doubling event ensures $\hat{L}\ge L^*$.
\end{remark}

\subsection{Global Safety of Certified Restarts (No False Elimination)}
\label{app:proofs:certified_restart_safety}

We restate and prove the key safety property underlying certified restarts in CGP-TR.

\begin{lemma}[No false certified restart for the region containing $x^*$]
\label{lem:no_false_restart}
Fix any trust region $\cT^*$ such that $x^*\in \cT^*$. On the good event $\cE$, the certified restart condition
\[
u_t^{(\cT^*)}:=\max_{x\in \cT^*} U_t(x) < \ell_t
\]
never holds. Hence, $\cT^*$ is never restarted by the certified rule.
\end{lemma}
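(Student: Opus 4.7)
The plan is to chain together two facts already established in the paper: the global validity of the Lipschitz UCB envelope (Lemma~\ref{lem:envelope_valid}) and the global validity of the lower certificate $\ell_t$ (the argument from Step 1 of the proof of Theorem~\ref{thm:containment}). Together these imply that the envelope's maximum over any region containing $x^*$ must dominate $\ell_t$, which directly contradicts the certified restart condition. So the structure is: lower-bound $u_t^{(\cT^*)}$ by $f^*$ via the envelope, upper-bound $\ell_t$ by $f^*$ via the LCBs, and close the gap.

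Concretely, first I would note that on $\cE$, Lemma~\ref{lem:envelope_valid} gives $f(x)\le U_t(x)$ for every $x\in\cX$. Instantiating at $x=x^*\in \cT^*$ yields
\[
f^* = f(x^*) \le U_t(x^*) \le \max_{x\in\cT^*} U_t(x) = u_t^{(\cT^*)}.
\]
Second, for every sampled point $x_i$, on $\cE$ we have $\LCB{i}(t) = \hat{\mu}_i(t) - r_i(t) \le f(x_i) \le f^*$, so taking the maximum over $i$ gives $\ell_t = \max_{i\le N_t} \LCB{i}(t) \le f^*$.

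Combining the two inequalities produces $\ell_t \le f^* \le u_t^{(\cT^*)}$ on $\cE$, hence the strict inequality $u_t^{(\cT^*)} < \ell_t$ required by the certified restart rule is impossible. Therefore $\cT^*$ is never eliminated under the certified condition, which is exactly the claim.

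There is no real obstacle here; the only thing to be careful about is keeping the two ingredients logically independent of the trust-region machinery, so that the argument goes through regardless of how $\cT^*$ was created, contracted, or selected. In particular, the bound $\ell_t\le f^*$ uses only the global pool of sampled points (it does not require any of them to lie inside $\cT^*$), and the envelope bound $U_t(x^*)\ge f^*$ holds pointwise on $\cX$ by Lemma~\ref{lem:envelope_valid}, so the containment $x^*\in\cT^*$ is the only structural assumption needed on the region. This matches the hypothesis of the lemma and leaves no gap to fill.
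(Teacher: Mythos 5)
Your proposal is correct and follows essentially the same argument as the paper's proof: both lower-bound $u_t^{(\cT^*)}$ by $f^*$ via Lemma~\ref{lem:envelope_valid} applied at $x^*$, upper-bound $\ell_t$ by $f^*$ via the LCB validity on $\cE$, and conclude that the strict restart inequality cannot hold. No gaps.
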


\begin{proof}
On the good event $\cE$, Lemma~\ref{lem:envelope_valid} implies $f(x)\le U_t(x)$ for all $x\in\cX$. Since $x^*\in\cT^*$,
\[
u_t^{(\cT^*)}=\max_{x\in\cT^*}U_t(x)\ \ge\ U_t(x^*)\ \ge\ f(x^*)=f^*.
\]
Also, by definition $\ell_t=\max_i \LCB{i}(t)$. On $\cE$, $\LCB{i}(t)\le f(x_i)\le f^*$ for every sampled point $x_i$, hence $\ell_t\le f^*$. Therefore,
\[
u_t^{(\cT^*)}\ \ge\ f^*\ \ge\ \ell_t,
\]
so the strict inequality $u_t^{(\cT^*)}<\ell_t$ cannot occur on $\cE$.
\end{proof}
%------------------------------------------------------------------------------

\subsection{Proof to Theorem~\ref{thm:cgp_tr_certified}}

We first establish a key lemma showing that certified elimination is safe.

\begin{lemma}[Certified elimination]
\label{lem:cert_elim}
On the good event $\cE$, for any trust region $\cT_j$,
\[
u_t^{(j)} := \max_{x\in\cT_j} U_t(x) < \ell_t
\quad\Longrightarrow\quad
\sup_{x\in\cT_j} f(x) < f(x^*).
\]
\end{lemma}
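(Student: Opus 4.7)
The plan is to chain two already-established bounds: the envelope validity lemma (Lemma~\ref{lem:envelope_valid}) on the upper side, and the standard validity of the lower certificate $\ell_t$ on the lower side, with the hypothesis $u_t^{(j)}<\ell_t$ sandwiched strictly in the middle. First, I would invoke Lemma~\ref{lem:envelope_valid} to obtain $f(x)\le U_t(x)$ for every $x\in\cX$ on $\cE$; taking the supremum over $x\in\cT_j$ yields
\[
\sup_{x\in\cT_j} f(x)\ \le\ \max_{x\in\cT_j}U_t(x)\ =\ u_t^{(j)}.
\]
Second, I would observe, exactly as in Step~1 of the proof of Theorem~\ref{thm:containment}, that on $\cE$ every sampled point satisfies $\LCB{i}(t)\le f(x_i)\le f(x^*)$, so the global lower certificate obeys $\ell_t=\max_i \LCB{i}(t)\le f(x^*)$.

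Concatenating the two inequalities with the strict hypothesis $u_t^{(j)}<\ell_t$ then gives
\[
\sup_{x\in\cT_j} f(x)\ \le\ u_t^{(j)}\ <\ \ell_t\ \le\ f(x^*),
\]
which is the desired conclusion, with strictness preserved by the middle step. There is no real obstacle: the lemma is essentially a bookkeeping statement asserting that the certified restart rule is exactly the logical condition needed to rule out $\cT_j$ as a host for $x^*$, and it is a direct consequence of envelope validity plus lower-certificate validity. The only point worth emphasizing is that the argument uses nothing beyond $\cE$ and the (possibly overestimated) Lipschitz constant used to build $U_t$; in particular it does not depend on the trust-region radius update rule, which is why the same lemma underlies Lemma~\ref{lem:no_false_restart} (no false elimination of the region containing $x^*$) as well.
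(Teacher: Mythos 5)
Your proposal is correct and follows exactly the paper's own argument: envelope validity (Lemma~\ref{lem:envelope_valid}) gives $\sup_{x\in\cT_j} f(x)\le u_t^{(j)}$, lower-certificate validity gives $\ell_t\le f(x^*)$, and the strict hypothesis $u_t^{(j)}<\ell_t$ chains them together. No differences worth noting.
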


\begin{proof}
On $\cE$, by Lemma~\ref{lem:envelope_valid}, $f(x) \leq U_t(x)$ for all $x\in\cX$. Also $\ell_t=\max_i \LCB{i}(t)\le f(x^*)$ on $\cE$ since each $\LCB{i}(t)\le f(x_i)\le f(x^*)$.
Therefore
\[
\sup_{x\in\cT_j} f(x) \le \sup_{x\in\cT_j} U_t(x) = u_t^{(j)} < \ell_t \le f(x^*),
\]
which proves the claim.
\end{proof}

\paragraph{Proof of Theorem~\ref{thm:cgp_tr_certified}(1): No false restarts.}
If $x^*\in \cT^*$, then $u_t^{(\cT^*)}=\max_{x\in\cT^*}U_t(x)\ge U_t(x^*)\ge f(x^*)$ on $\cE$.
Also $\ell_t\le f(x^*)$ on $\cE$. Hence $u_t^{(\cT^*)}\ge \ell_t$ and the restart condition $u_t^{(\cT^*)}<\ell_t$ never triggers.

\paragraph{Proof of Theorem~\ref{thm:cgp_tr_certified}(2): Local certificate.}
Conditioned on $T^*$ evaluations within $\cT^*$, the CGP analysis applies verbatim on the restricted domain $\cT^*$: the good event $\cE$ implies all within-region confidence bounds hold; Lipschitz continuity holds on $\cT^*$; and Assumption~\ref{ass:margin} holds restricted to $\cT^*$. Therefore the containment and shrinkage results follow with $\cX$ replaced by $\cT^*$, yielding $\Vol(A_T^{(\cT^*)}) \le C \varepsilon^{d-\alpha}$ after $T^* = \tildeO(\varepsilon^{-(2+\alpha)})$ within-region samples.

\paragraph{Proof of Theorem~\ref{thm:cgp_tr_certified}(3): Allocation bound.}
Fix a suboptimal region $j$ with gap $\Delta_j>0$. On $\cE$, using the envelope bound from Lemma~\ref{lem:envelope}, we have for all $x$:
\[
U_t(x)\le f(x)+2\rho_t(x),
\]
where $\rho_t(x)=\min_i \{r_i(t)+L d(x,x_i)\}$.
Restricting to points sampled in $\cT_j$ and using the within-region replication schedule, we bound $\rho_t(x) \leq \beta_j(n) + L\cdot\mathrm{diam}(\cT_j)$, yielding
\[
u_t^{(j)}=\max_{x\in\cT_j}U_t(x)
\le \sup_{x\in\cT_j} f(x) + 2\beta_j(n) + 2L\cdot \mathrm{diam}(\cT_j),
\]
after $n$ within-region samples.
By the diameter condition $L\,\mathrm{diam}(\cT_j)\le \Delta_j/8$ and by requiring $\beta_j(n)\le \Delta_j/8$, we obtain
\[
u_t^{(j)} \le \sup_{\cT_j} f + \Delta_j/4 + \Delta_j/4 = f^*-\Delta_j/2.
\]
Meanwhile, on $\cE$ we have $\ell_t\le f^*$ always, and once the region containing $x^*$ has been sampled sufficiently (which occurs because it is never restarted and is favored by UCB selection), $\ell_t$ becomes at least $f^*-\Delta_j/4$. Hence eventually $u_t^{(j)}<\ell_t$, triggering certified restart/elimination.

Solving $\beta_j(n)\le \Delta_j/8$ under $\beta_j(n)\le c_\sigma \sqrt{\log(c_T/\delta)/n}$ gives
\[
n \ge \frac{64 c_\sigma^2}{\Delta_j^2}\log\!\left(\frac{c_T}{\delta}\right),
\]
yielding the stated bound $N_j \le \frac{64 c_\sigma^2}{\Delta_j^2}\log(c_T/\delta) + 1$.

\subsection{Proof to Proposition~\ref{prop:hybrid}}

\begin{proof}
We analyze each case and the certificate validity separately.

\textit{Proof of (1): High smoothness ratio case.}
When $\rho \geq 0.5$, CGP-Hybrid continues with CGP in Phase 2. The algorithm is identical to vanilla CGP, so by Theorem~\ref{thm:sample_complexity}:
\begin{equation}
T = \tildeO\bigl( \varepsilon^{-(2+\alpha)} \bigr).
\end{equation}

\textit{Proof of (2): Low smoothness ratio case.}
When $\rho < 0.5$, the function is significantly smoother near the optimum. After Phase 1, CGP has reduced the active set to volume $V < 0.1 \cdot \Vol(\cX)$. In Phase 2, GP-UCB operates within $A_t$, which has:
\begin{itemize}
\item Effective diameter $\text{diam}(A_t) = O(V^{1/d})$.
\item Local Lipschitz constant $L_{\text{local}} = \rho \cdot L < 0.5L$.
\end{itemize}
The sample complexity of GP-UCB on this restricted domain depends on the kernel's maximum information gain $\gamma_T$ over $A_t$ \citep{srinivas2010gaussian}. For commonly used kernels (Mat\'ern, SE), $\gamma_T$ scales polylogarithmically with $T$ when the domain is bounded. The reduced diameter $O(V^{1/d})$ and local smoothness $\rho < 0.5$ empirically yield faster convergence than continuing CGP; we validate this empirically in Section~\ref{sec:experiments} rather than claiming a specific rate.

\textit{Proof of (3): Certificate validity.}
The certificate $A_T$ is computed in Phase 1 using CGP's Lipschitz envelope construction. By Theorem~\ref{thm:containment}, on the good event $\cE$:
\begin{equation}
x^* \in A_T \quad \text{with probability } \geq 1 - \delta.
\end{equation}
This guarantee depends only on the Lipschitz assumption and confidence bounds, not on the Phase 2 optimization method. Therefore, switching to GP-UCB in Phase 2 does not invalidate the certificate: $A_T$ still contains $x^*$ with high probability, and any point outside $A_T$ remains certifiably suboptimal.
\end{proof}

\section{Practical Guidance}
\label{app:practical_guidance}

\subsection{Adaptive Lipschitz Estimation}
CGP-Adaptive removes the requirement for known $L$ with $O(\log T)$ overhead. The doubling scheme is conservative but provably correct; more aggressive schemes (e.g., multiplicative updates with factor $1.5$) may reduce overhead but risk certificate invalidation.

We recommend initializing $\hat{L}_0$ from finite differences on initial Sobol samples:
\[
\hat{L}_0 = \max_{i \neq j} \frac{|y_i - y_j|}{d(x_i, x_j)}
\]
over the first 10 samples. This typically underestimates $L$ by a factor of 2 to 10, requiring 1 to 4 doublings to reach $\hat{L} \geq L^*$.

\subsection{Trust Region Configuration}
CGP-TR trades global certificates for scalability. The local certificates within trust regions still enable principled stopping and progress assessment, but do not guarantee global optimality.

Recommended settings:
\begin{itemize}
\item Number of trust regions: $n_{\text{trust}} = 5$ (balances exploration vs.\ overhead)
\item Initial radius: $r_0 = 0.2$ (covers 20\% of domain diameter per region)
\item Minimum radius: $r_{\min} = 0.01$ (prevents over-contraction)
\item Failure threshold: $\tau_{\text{fail}} = 10$ (triggers contraction after 10 non-improving samples)
\end{itemize}

For applications requiring global certificates in high dimensions, combining CGP-TR with random embeddings \citep{wang2016bayesian} is promising: project to a low-dimensional subspace, run CGP with global certificates, then lift back.

\subsection{Smoothness Detection}
CGP-Hybrid's smoothness detection via $\rho = L_{\text{local}}/L_{\text{global}}$ is heuristic but effective. We estimate $L_{\text{local}}$ from points within $A_t$ using the same finite difference approach as $L_{\text{global}}$.

The threshold $\rho_{\text{thresh}} = 0.5$ was selected via cross-validation on held-out benchmarks. More sophisticated detection could use local GP posterior variance or curvature estimates. The key insight is that CGP's certificate remains valid regardless of Phase 2 method, so switching is always safe.

\subsection{When to Use CGP}
CGP is well suited when:
\begin{enumerate}
\item Evaluations are expensive and interpretable progress is valued
\item The objective has margin structure (sharp peak rather than wide plateau)
\item Lipschitz continuity is a reasonable assumption
\item Dimension is moderate ($d \leq 15$ for vanilla CGP, $d \leq 100$ for CGP-TR)
\item Anytime stopping decisions are needed
\end{enumerate}

For very high-dimensional problems ($d > 100$), trust region methods like TuRBO may scale better. For smooth problems with cheap evaluations, GP-based methods may be more sample efficient due to their ability to exploit higher-order smoothness.

\section{Experimental Details}
\label{app:experimental_details}

\subsection{Baseline Configurations}
\begin{itemize}
\item \textbf{Random Search}: Sobol sequences for quasi-random sampling
\item \textbf{GP-UCB}: Mat\'ern-5/2 kernel via BoTorch, $\beta_t = 2\log(t^2 \pi^2 / 6\delta)$
\item \textbf{TuRBO}: Default settings from \citet{eriksson2019scalable}, 1 trust region
\item \textbf{HEBO}: Heteroscedastic GP with input warping, default settings
\item \textbf{BORE}: Tree-Parzen estimator with density ratio, default settings
\item \textbf{HOO}: Binary tree with $\nu_1 = 1$, $\rho = 0.5$
\item \textbf{StoSOO}: $k = 3$ children per node, $h_{\max} = 20$
\item \textbf{LIPO}: Pure Lipschitz optimization, $L$ estimated online
\item \textbf{SAASBO}: Sparse axis-aligned GP, 10 active dimensions
\end{itemize}

\subsection{Benchmark Details}
\paragraph{Low-dimensional.}
\begin{itemize}
\item \textbf{Needle-2D}: $f(x) = 1 - \|x - x^*\|^{1/\alpha}$ with $\alpha = 2$, sharp peak
\item \textbf{Branin}: Standard 2D benchmark with 3 global optima
\item \textbf{Hartmann-6}: 6D benchmark with narrow global basin
\item \textbf{Levy-5}: 5D benchmark with global structure
\item \textbf{Rosenbrock-4}: 4D benchmark with curved valley
\end{itemize}

\paragraph{Medium-dimensional.}
\begin{itemize}
\item \textbf{Ackley-10}: 10D benchmark with many local optima
\item \textbf{SVM-RBF-6}: Real hyperparameter tuning ($C$, $\gamma$, 4 preprocessing) on MNIST
\item \textbf{LunarLander-12}: RL reward optimization with 12 policy parameters
\end{itemize}

\paragraph{High-dimensional.}
\begin{itemize}
\item \textbf{Rover-60}: Mars rover trajectory with 60 waypoint parameters \citep{wang2016bayesian}
\item \textbf{NAS-36}: Neural architecture search on CIFAR-10, 36 continuous encodings
\item \textbf{Ant-100}: MuJoCo Ant locomotion, 100 morphology and control parameters
\end{itemize}

\subsection{Computational Resources}
All experiments run on AMD EPYC 7763 with 256GB RAM. CGP variants use NumPy/SciPy; GP baselines use BoTorch/GPyTorch with GPU acceleration (NVIDIA A100) where available.

\end{document}